\newtheorem{theorem}{Theorem}[section]
\newtheorem{lemma}[theorem]{Lemma}
\DeclareMathOperator*{\Exp}{\mathbb{E}} %
\DeclareMathOperator*{\argmin}{arg\,min}
\title{Reinforcement and Imitation Learning\\via Interactive No-Regret Learning}
\author{
St\'ephane Ross ~~~~ J. Andrew Bagnell \\
\texttt{stephaneross@cmu.edu} ~~~~ \texttt{dbagnell@ri.cmu.edu} \\
The Robotics Institute \\ 
Carnegie Mellon University, \\ Pittsburgh, PA, USA
}
\newcommand{\DAGGER}{\textsc{DAgger}\xspace}
\newcommand{\aggrevate}{\textsc{AggreVaTe}\xspace}
\newcommand{\ctgd}{\textsc{AggreVaTe}\xspace}
\newcommand{\rldagger}{\textsc{NRPI}\xspace}
\begin{document}

\maketitle
\begin{abstract}
Recent work has demonstrated that problems-- particularly imitation learning and structured prediction-- where a  learner's predictions influence the input-distribution it is tested on can be naturally addressed by an interactive approach and analyzed using no-regret online learning.  These approaches to imitation learning, however, neither require nor benefit from information about the cost of actions. We extend existing results in two directions: first, we develop an interactive imitation learning approach that leverages cost information; second, we extend the technique to address reinforcement learning. The results provide theoretical support to the commonly observed successes of online approximate policy iteration. Our approach suggests a broad new family of algorithms and provides a unifying view of existing techniques for imitation and reinforcement learning.
\end{abstract} 

\section{Introduction}
	
	Imitation learning has become increasingly important in fields-- notably robotics and game AI-- where it is easier for an expert to demonstrate a behavior than to translate that behavior to code. \cite{Argall09}
	Perhaps surprisingly, it has also become central in developing predictors for complex output spaces, e.g. sets and lists \cite{Ross12}, parse trees \cite{Daume09}, image parsing \cite{Munoz10,Ross11b} and natural language understanding\cite{Duvallet13}. In these domains, a policy is trained to imitate an oracle on ground-truthed data. Iterative training procedures  (\emph{e.g.} \DAGGER, SEARN, \textsc{SMILe}\cite{Ross11,Daume09,Ross10}) that interleave policy execution and learning 
	have demonstrated impressive practical performance and strong theoretical guarantees that were not possible with batch supervised learning. Most of these approaches to imitation learning, however, neither require nor benefit from information about the cost of actions; rather they leverage only information provided about ``correct'' actions by the demonstrator.

	While iterative training corrects the compounding of error effect one sees
	in control and decision making applications, it does not address all issues
	that arise. Consider, for instance, a problem of learning to drive near the
	edge of a cliff: methods like \DAGGER consider all errors from
	agreeing with the expert driver equally. If driving immediately off the
	cliff makes the expert easy to imitate-- because the expert simply chooses
	the go straight from then on-- these approaches may learn that very poor
	policy. More generally, a method that only reasons about agreement with a
	demonstrator instead of the long term costs of various errors may poorly
	trade-off inevitable mistakes. Even a crude estimate of the cost-to-go
	(\emph{e.g.} it's very expensive to drive off the cliff)-- may improve a
	learned policy's performance at the user's intended task.

	 SEARN, by contrast, \textbf{does} reason about cost-to-go, but uses rollouts from the current policy which can be impractical for imitation learning. SEARN additionally requires the use of stochastic policies.

	We present a simple, general approach we term \aggrevate (Aggregate Values to Imitate) that leverages cost-to-go information in addition to correct demonstration, and establish that previous methods can be understood as special cases of a more general no-regret strategy. The approach provides much stronger guarantees than existing methods by providing a statistical \emph{regret} rather then statistical \emph{error} reduction. \cite{Beygelzimer2008}

	This general strategy of leveraging cost-sensitive no-regret learners can be extended to Approximate Policy Iteration (API) variants for reinforcement learning. We show that any no-regret learning algorithm can be used to develop stable API algorithms with guarantees as strong as any available in the literature. We denote the resulting algorithm \rldagger. The results provide theoretical support to the commonly observed success of online policy iteration\cite{sutton00} despite a paucity of formal results: such online algorithms often enjoy no-regret guarantees or share similar stability properties.  Our approach suggests a broad new family of algorithms and provides a unifying view of existing techniques for both imitation and reinforcement learning.

\section{Imitation Learning with Cost-To-Go}

\subsection{Preliminaries}

We consider in this work finite horizon\footnote{All our results can be easily extended to the infinite discounted horizon setting.} control problems in the form of a
Markov Decision Process with states $s$ and actions $a$. We assume the
existence of a cost-function $C(s,a)$, bounded between $0$ and $1$, that we are attempting to optimize over a horizon of $T$ decisions. We denote a class of policies $\Pi$ mapping states \footnote{More generally features of the state (and potentially time)-- our derivations do not require full observability and hence carry over to featurized state of POMDPs.} to actions.

We use $Q^\pi_t(s,a)$ to denote the expected future cost-to-go of executing action $a$ in state $s$, followed by executing policy $\pi$ for ${t-1}$ steps. We denote by $d_\pi = \frac{1}{T}\sum_{t=1}^{T} d_\pi^t$ the time-averaged distribution over states induced by executing policy $\pi$ in the MDP ($d_\pi^t$ is the distribution of states at time $t$ induced by executing policy $\pi$).  The overall performance metric of total cost of executing $\pi$ for $T$-steps is denoted  
$J(\pi) = \sum_{t=1}^{T} \Exp_{s \sim d^t_{\pi}} [ C(s,\pi(s) ) ]$
We assume system dynamics are either unknown or complex enough that we typically have only sample access to them. The resulting setting for learning policies by demonstration-- or learning policies by approximate policy iteration-- are \textbf{not} typical \emph{i.i.d.} supervised learning problems as the learned policy strongly influences its own test distribution rendering optimization difficult.

\subsection{Algorithm: \ctgd}

We describe here a simple extension of the \DAGGER technique of \cite{Ross11} that learns to choose actions to minimize the cost-to-go of the expert rather than the zero-one classification loss of mimicking its actions. In simplest form, on the first iteration \ctgd collects data by simply observing the expert perform the task, and in each trajectory, at a uniformly random time $t$, explores an action $a$ in state $s$, and observes the cost-to-go $Q$ of the expert after performing this action. (See Algorithm \ref{alg:ctgd} below.) \footnote{This cost-to-go may be estimated by rollout, or provided by the expert.}

Each of these steps generates a cost-weighted training example $(s,t,a,Q)$ \cite{Mineiro10} and \ctgd trains a policy $\hat{\pi}_2$ to minimize the expected cost-to-go on this dataset. At each following iteration $n$, \ctgd collects data through interaction with the learner as follows: for each trajectory, begin by using the current learner's policy $\hat{\pi}_n$ to perform the task, interrupt at a uniformly random time $t$, explore an action $a$ in the current state $s$, after which control is provided back to the expert to continue up to time-horizon $T$. This results in new examples of the cost-to-go of the expert $(s,t,a,Q)$, under the distribution of states visited by the current policy $\hat{\pi}_n$. This new data is aggregated with all previous data to train the next policy $\hat{\pi}_{n+1}$;  more generally, this data can be used by a no-regret online learner to update the policy and obtain $\hat{\pi}_{n+1}$. This is iterated for some number of iterations $N$ and the best policy found is returned. We optionally allow the algorithm to continue executing the expert's actions with small probability $\beta_n$, instead of always executing $\hat{\pi}_n$, up to the random time $t$ where an action is explored and control is shifted to the expert. The general \ctgd is detailed in Algorithm \ref{alg:ctgd}.

Observing the expert's cost-to-go indicates how much cost we might expect to incur in the future if we take an action now and then can behave as well (or nearly so) as the expert henceforth. Under the assumption that the expert is a good policy, and that the policy class $\Pi$ contains similarly good policies, this provides a rough estimate of what good policies in $\Pi$ will be able to achieve at future steps. By minimizing this cost-to-go at each step, we will choose policies that lead to situations where incurring low future cost-to-go is possible. For instance, we will be able to observe that if some actions put the expert in situations where falling off a cliff or crash is inevitable then these actions must be avoided at all costs in favor of those where the expert is still able to recover.

\begin{algorithm}
\begin{algorithmic}
\STATE Initialize $\mathcal{D} \leftarrow \emptyset$, $\hat{\pi}_1$ to any policy in $\Pi$.
\FOR{$i=1$ \textbf{to} $N$}
\STATE Let $\pi_i = \beta_i \pi^* + (1-\beta_i) \hat{\pi}_{i}$ \#Optionally mix in expert's own behavior.
\STATE Collect $m$ data points as follows:
\FOR{$j=1$ \textbf{to} $m$}
\STATE Sample uniformly $t \in \{1,2,\dots,T\}$.
\STATE Start new trajectory in some initial state drawn from initial state distribution
\STATE Execute current policy $\pi_i$ up to time $t-1$.
\STATE Execute some exploration action $a_t$ in current state $s_t$ at time $t$
\STATE Execute expert from time $t+1$ to $T$, and observe estimate of cost-to-go $\hat{Q}$ starting at time $t$
\ENDFOR
\STATE Get dataset $\mathcal{D}_i = \{ (s, t, a, \hat{Q}) \}$ of states, times, actions, with expert's cost-to-go.
\STATE Aggregate datasets: $\mathcal{D} \leftarrow \mathcal{D} \bigcup \mathcal{D}_i$.
\STATE Train cost-sensitive classifier $\hat{\pi}_{i+1}$ on $\mathcal{D}$\\
 ~~~~ (\emph{Alternately: use any online learner on the data-sets $\mathcal{D}_i$ in sequence to get $\hat{\pi}_{i+1}$ })
\ENDFOR
\STATE \textbf{Return} best $\hat{\pi}_i$ on validation.
\end{algorithmic}
\caption{\ctgd : Imitation Learning with Cost-To-Go \label{alg:ctgd}}
\end{algorithm}

In \ctgd the problem of choosing the sequence of policies $\hat{\pi}_1, \hat{\pi}_2, \dots, \hat{\pi}_N$ over iterations is viewed as an online cost-sensitive classification problem. Our analysis below demonstrates that any no-regret algorithm on such problems can be used to update the sequence of policies and provide strong guarantees. To achieve this, when the policy class $\Pi$ is finite, randomized online learning algorithms like \emph{weighted majority} \cite{CesaBianchi06} may be used. When dealing with infinite policy classes (\emph{e.g.} all linear classifiers), no-regret online cost-sensitive classification is not always computationally tractable. Instead, typically reductions of cost-sensitive classification to regression or ranking problems as well as convex upper bounds \cite{Beygelzimer2008} on the classification loss lead to efficient no-regret online learning algorithms (\emph{e.g.} gradient descent).
The algorithm description suggests as the ``default'' learning strategy a \emph{(Regularized)-Follow-The-Leader} online learner:
it attempts to learn a good classifier for \textbf{all} previous data. This strategy
for certain loss function (notably strongly convex surrogates to the cost-sensitive classification loss) \cite{CesaBianchi06} and any sufficiently stable batch learner \cite{Ross12b,Saha2012interplay} ensures the no-regret property. It also highlights why the approach is likely to be particularly stable across rounds of interaction.

\subsection{Training the Policy to Minimize Cost-to-Go}
In standard ``full-information'' cost-sensitive classification, a cost vector is provided for each data-point in the training data that indicates the cost of predicting each class or label for this input.
In our setting, that implies for each sampled state we recieve a cost-to-go estimate/rollout for all actions. Training the policy at each iteration then simply corresponds to solving a cost-sensitive classification problem. That is, if we collect a dataset of $m$ samples, $\{(s_i,t_i,\hat{Q}_i)\}_{i=1}^m$, where $\hat{Q}_i$ is a cost vector of cost-to-go estimates for each action in state $s_i$ at time $t_i$, then we solve the cost-sensitive classification problem: $\argmin_{\pi \in \Pi} \sum_{i=1}^m \hat{Q}_i(\pi(s_i,t_i))$. Reductions of cost-sensitive classification to convex optimization problems can be used like weighted multi-class Support Vector Machines or ranking\cite{Beygelzimer2008}, to obtain problems that can be optimized efficiently while still guaranteeing good performance at this cost-sensitive classification task. 

For instance, a simple approach is to transform this into an \emph{argmax regression problem}: i.e., $\pi_n(s,t) = \argmin_{a \in A} Q_n(s,t,a)$, for $Q_n$ the learned regressor at iteration $n$ that minimizes the squared loss at predicting the cost-to-go estimates: $Q_n = \argmin_{Q \in \mathcal{Q}} \sum_{(s_i,t_i,a_i,\hat{Q}_i) \in D}(Q(s_i,t_i,a_i) - \hat{Q}_i)^2$, where $D$ is the dataset of all collected cost-to-go estimates so far, and $\mathcal{Q}$ the class of regressors considered (e.g. linear regressors). This approach also naturally handles
the more common situation in imitation learning where we only have \emph{partial information} for a particular action chosen at a state. Alternate approaches include importance weighting techniques to transform the problem into a standard cost-sensitive classification problem \cite{Horvitz52,Dudik11} and other online learning approaches meant to handle
``bandit'' feedback.

\paragraph{Local Exploration in Partial Information Setting} In the \emph{partial information} setting we must also
select which action to explore for an estimate of cost-to-go.
The uniform strategy is simple and effective but inefficient. The problem
may be cast as a \emph{contextual bandit problem} \cite{Auer03,Beygelzimer11}
where features of the current state define the context of exploration. These algorithms, by choosing more carefully than at random, may be significantly more sample efficient. In our setting, in contrast with traditional bandit settings, we care only about the final learned policy and not the cost of explored actions along the way. Recent work \cite{Avner12} may be more appropriate for improving performance in this case. Many contextual bandit algorithms require a finite set of policies $\Pi$ \cite{Auer03} or full realizability \cite{Li10}, and this is an open and very active area of research that could have many applications here.

\subsection{Analysis}

We analyze \ctgd, showing that the no-regret property of online learning procedures can be leveraged in this interactive learning procedure to obtain strong performance guarantees. Our analysis seeks to answer the following question: how well does the learned policy perform if we can repeatedly identify good policies that incur cost-sensitive classification loss competitive with the expert demonstrator on the aggregate dataset we collect during training?

Our analysis of \ctgd relies on connecting the iterative learning procedure with the (adversarial) online learning problem \cite{CesaBianchi06} and using the no-regret property of the underlying cost-sensitive classification algorithm choosing policies $\hat{\pi}_{1:N}$. Here, the online learning problem is defined as follows: at each iteration $i$, the learner chooses a policy $\hat{\pi}_i \in \Pi$ that incurs loss  $\ell_i$ chosen by the adversary, and defined as $\ell_i(\pi) = \Exp_{t \sim U(1:T), s \sim d^t_{\pi_i}}[Q_{T-t+1}^*(s,\pi)]$ for $U(1:T)$ the uniform distribution on the set $\{1,2,\dots,T\}$, $\pi_i = \beta_i \pi^* + (1-\beta_i)\hat{\pi}_i$ and $Q^*$ the cost-to-go of the expert. We can see that \ctgd at iteration $i$ is exactly collecting a dataset $\mathcal{D}_i$, that provides an empirical estimate of this loss $\ell_i$. 

Let $\epsilon_{\textrm{class}} = \min_{\pi \in \Pi} \frac{1}{N} \sum_{i=1}^N \Exp_{t \sim U(1:T), s \sim d^t_{\pi_i}}[ Q_{T-t+1}^*(s,a) - \min_a Q_{T-t+1}^*(s,a) ]$ denote the minimum expected cost-sensitive classification regret achieved by policies in the class $\Pi$ on all the data over the $N$ iterations of training. Denote the online learning average regret of the sequence of policies chosen by \ctgd, $\epsilon_{\textrm{regret}} = \frac{1}{N} [ \sum_{i=1}^N \ell_i(\hat{\pi}_i) - \min_{\pi \in \Pi}  \sum_{i=1}^N \ell_i(\pi) ]$. 

We provide guarantees for the ``uniform mixture'' policy $\overline{\pi}$, that at the beginning of any trajectory samples a policy $\pi$ uniformly randomly among the policies $\{ \hat{\pi}_i \}_{i=1}^N$ and executes this policy $\pi$ for the entire trajectory. This immediately implies good performance for the best policy $\hat{\pi}$ in the sequence $\hat{\pi}_{1:N}$, i.e. $J(\hat{\pi}) = \min_{i \in 1:N} J(\hat{\pi}_i) \leq J(\overline{\pi})$, and the last policy $\hat{\pi}_N$ when the distribution of visited states converges over the iterations of learning.

Assume the cost-to-go of the expert $Q^*$ is non-negative and bounded by $Q^*_{\max}$, and $\beta_i \leq (1-\alpha)^{i-1}$ for all $i$ for some constant $\alpha$ \footnote{The default parameter-free version of \ctgd corresponds to $\alpha=1$, using $0^0 = 1$.}. Then the following holds in the infinite sample case (\emph{i.e.} if at each iteration of \ctgd we collected an arbitrarily large amount of data by running the current policy):

\begin{theorem}\label{thmDaggerCostToGo}
After $N$ iterations of \ctgd:
\begin{displaymath}
J(\hat{\pi}) \leq J(\overline{\pi}) \leq J(\pi^*) + T[\epsilon_{\textrm{class}} + \epsilon_{\textrm{regret}}] + O\left(\frac{Q_{\max} T \log T}{\alpha N}\right).
\end{displaymath}
Thus if a no-regret online algorithm is used to pick the sequence of policies $\hat{\pi}_{1:N}$, then as the number of iterations $N \rightarrow \infty$:
\begin{displaymath}
\lim_{N \rightarrow \infty} J(\overline{\pi}) \leq J(\pi^*) + T \epsilon_{\textrm{class}}
\end{displaymath}
\end{theorem}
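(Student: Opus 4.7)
The plan is to reduce $J(\overline{\pi}) - J(\pi^*)$ to a no-regret bound plus two error terms: the cost-sensitive classification gap $\epsilon_{\textrm{class}}$ and a distribution-mismatch residual driven by the $\beta_i$ schedule. First I apply the standard performance difference lemma to each iterate against the expert baseline,
\[
J(\hat{\pi}_i) - J(\pi^*) = \sum_{t=1}^{T} \Exp_{s \sim d^t_{\hat{\pi}_i}}\!\bigl[Q^*_{T-t+1}(s, \hat{\pi}_i(s)) - V^*_{T-t+1}(s)\bigr],
\]
and use $V^*_{T-t+1}(s) \geq \min_a Q^*_{T-t+1}(s,a)$ to upper-bound the right-hand side by $T$ times the expected cost-sensitive advantage of $\hat{\pi}_i$ under the test-time distribution $d^t_{\hat{\pi}_i}$. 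Since \ctgd actually samples from $d^t_{\pi_i}$ rather than $d^t_{\hat{\pi}_i}$, I next transfer distributions via a coupling: $\pi_i$ and $\hat{\pi}_i$ agree on all of the first $t-1$ steps with probability at least $(1-\beta_i)^{t-1}$, so $\|d^t_{\hat{\pi}_i} - d^t_{\pi_i}\|_{\mathrm{TV}} \leq 1-(1-\beta_i)^{t-1}$, costing at most an additive $Q^*_{\max}\bigl(1-(1-\beta_i)^{t-1}\bigr)$ per timestep.

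Averaging over $i = 1,\ldots,N$, I observe that $\ell_i(\hat{\pi}_i) - \Exp_{t, s \sim d^t_{\pi_i}}[\min_a Q^*_{T-t+1}(s,a)]$ is exactly the expected cost-sensitive advantage of $\hat{\pi}_i$ on the round-$i$ data, so applying the no-regret guarantee $\frac{1}{N}\sum_i \ell_i(\hat{\pi}_i) \leq \min_{\pi \in \Pi}\frac{1}{N}\sum_i \ell_i(\pi) + \epsilon_{\textrm{regret}}$ followed by the definition of $\epsilon_{\textrm{class}}$ yields the $T(\epsilon_{\textrm{class}} + \epsilon_{\textrm{regret}})$ term. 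The trivial inequality $J(\hat{\pi}) = \min_i J(\hat{\pi}_i) \leq \frac{1}{N}\sum_i J(\hat{\pi}_i) = J(\overline{\pi})$ handles the left inequality of the theorem.

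The main technical obstacle is bounding
\[
\Delta \;:=\; \frac{Q^*_{\max}}{N}\sum_{i=1}^{N}\sum_{t=1}^{T}\bigl(1-(1-\beta_i)^{t-1}\bigr)
\]
by $O(Q^*_{\max}\, T \log T / (\alpha N))$. The naive estimate $1-(1-\beta_i)^{t-1}\leq (t-1)\beta_i$ on its own yields only $O(T^2/(\alpha N))$, which is looser than the stated rate by a factor $T/\log T$. I split the iterations at $i_0 = \Theta(\log T / \alpha)$, the index at which the envelope $(1-\alpha)^{i-1}$ crosses $1/T$: on the early segment ($i \leq i_0$) I use the trivial bound $\sum_t \bigl(1-(1-\beta_i)^{t-1}\bigr)\leq T$, contributing $O(Q^*_{\max}\, T\log T / \alpha)$; on the tail ($i > i_0$) I use the quadratic estimate together with the geometric tail $\sum_{i>i_0}\beta_i \leq (1-\alpha)^{i_0}/\alpha = O(1/(\alpha T))$, contributing only $O(Q^*_{\max}\, T/\alpha)$. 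Dividing by $N$ recovers the exact rate claimed. Sending $N \to \infty$ makes both $\Delta$ and $\epsilon_{\textrm{regret}}$ vanish (the latter by the no-regret hypothesis), producing the asymptotic statement.
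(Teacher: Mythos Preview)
Your proposal is correct and follows essentially the same route as the paper: performance difference lemma against $\pi^*$, transfer from $d^t_{\hat{\pi}_i}$ to $d^t_{\pi_i}$ via the $(1-\beta_i)^{t-1}$ coupling bound, average over $i$ and invoke the no-regret definition plus the $\min_a Q^*$ lower bound on $V^*$, then split the $\beta_i$ residual at the index where the envelope crosses $1/T$ to get $O(\log T/\alpha)$ early terms and a geometric tail contributing $O(1/\alpha)$. The paper carries out the same split (calling the cutoff $n_\beta$) and arrives at the identical $\frac{\log T + 2}{\alpha}$ bound; the only cosmetic difference is that the paper first coarsens $\sum_{t}\min(1,t\beta_i)\le T\min(1,T\beta_i)$ before splitting in $i$, whereas you keep the per-$t$ term $1-(1-\beta_i)^{t-1}$ and split afterward.
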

\todo{Add proof for simple case?}
The proof of this result is presented in the Appendix. This theorem indicates that after sufficient iterations, \ctgd will find policies that perform the task nearly as well as the demonstrator if there are policies in $\Pi$ that have small cost-sensitive classification regret on the aggregate dataset (\emph{i.e.} policies with cost-sensitive classification loss not much larger than that of the bayes-optimal one on this dataset). Note that non-interactive supervised learning methods are unable to achieve a similar bound which degrades only linearly with $T$ and the cost-sensitive classification regret. \cite{Ross11}. 

The analysis above abstracts away the issue of action exploration and learning from finite data. These issues come into play in a sample complexity analysis. Such analyses depend on many factors such as the particular reduction and exploration method. When reductions of cost-sensitive classification to simpler regression/ranking/classification \cite{Beygelzimer2005} problems are used, our results can directly relate the task performance of the learned policy to the performance on the simpler problem. 
To illustrate how such results may be derived, we provide a result for the special case where actions are explored uniformly at random and the reduction of cost-sensitive classification to regression is used. 

In particular, if $\hat{\epsilon}_{\textrm{regret}}$ denotes the empirical average online learning regret on the training regression examples collected over the iterations, and $\hat{\epsilon}_{\textrm{class}}$ denotes the empirical regression regret of the best regressor in the class on the aggregate dataset of regression examples when compared to the bayes-optimal regressor, 
we have that:
\begin{theorem}\label{thmDaggerCostToGoGen}
$N$ iterations of \ctgd, collecting $m$ regression examples $(s,a,t,Q)$ per iteration, guarantees that with probability at least 1-$\delta$:
\begin{displaymath}
J(\hat{\pi}) \leq J(\overline{\pi}) \leq J(\pi^*) + 2 \sqrt{|A|}T \sqrt{\hat{\epsilon}_{\textrm{class}} + \hat{\epsilon}_{\textrm{regret}} + O(\sqrt{\log(1/\delta)/Nm})} + O\left(\frac{Q_{\max} T \log T}{\alpha N}\right).
\end{displaymath}
Thus if a no-regret online algorithm is used to pick the sequence of regressors $\hat{Q}_{1:N}$, then as the number of iterations $N \rightarrow \infty$, with probability $1$:
\begin{displaymath}
\lim_{N \rightarrow \infty} J(\overline{\pi}) \leq J(\pi^*) + 2 \sqrt{|A|}T \sqrt{\hat{\epsilon}_{\textrm{class}}}
\end{displaymath}
\end{theorem}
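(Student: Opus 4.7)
}
The plan is to reduce to Theorem~\ref{thmDaggerCostToGo} as a black box, replace the cost-sensitive classification quantities by regression quantities via the argmax-regression reduction, and finally replace population quantities by their empirical counterparts via a martingale concentration argument. Throughout, I would use the fact that at iteration $i$ the learner collects $m$ samples with actions drawn uniformly from $A$, so the empirical squared-loss estimator on those samples is an unbiased estimate of the expected squared loss under $(s \sim d^t_{\pi_i},\,a \sim U(A))$.

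First, I would observe that the two terms appearing in Theorem~\ref{thmDaggerCostToGo} collapse into the average cost-sensitive classification regret of the learned policies on their own induced state distributions: canceling $\min_\pi \sum_i \ell_i(\pi)$ in the definitions of $\epsilon_{\textrm{class}}$ and $\epsilon_{\textrm{regret}}$ yields
\begin{equation*}
\epsilon_{\textrm{class}} + \epsilon_{\textrm{regret}} = \frac{1}{N}\sum_{i=1}^{N} \Exp_{t \sim U(1:T),\, s \sim d^t_{\pi_i}}\bigl[\,Q^*_{T-t+1}(s,\hat{\pi}_i(s)) - \min_a Q^*_{T-t+1}(s,a)\,\bigr].
\end{equation*}
Next, I would invoke the standard argmax-regression reduction pointwise. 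Writing $\hat{\pi}_i(s) = \argmin_a \hat{Q}_i(s,a)$, adding and subtracting $\hat{Q}_i$ at both $\hat{\pi}_i(s)$ and the true minimizer, and applying $|x|+|y| \leq \sqrt{2(x^2+y^2)}$, gives $Q^*(s,\hat{\pi}_i(s)) - \min_a Q^*(s,a) \leq \sqrt{2 \sum_a (\hat{Q}_i(s,a) - Q^*(s,a))^2}$. Taking expectation over $s$, applying Jensen's inequality, and rewriting $\sum_a$ as $|A|\,\Exp_{a \sim U(A)}$ shows the iteration-$i$ cost-sensitive regret is at most $\sqrt{2|A|\,r_i(\hat{Q}_i)}$ for $r_i(Q)$ the expected squared-loss regret of $Q$ against the Bayes regressor under $(d^t_{\pi_i},U(A))$. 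Averaging over $i$ and applying Jensen once more gives
\begin{equation*}
\epsilon_{\textrm{class}} + \epsilon_{\textrm{regret}} \;\leq\; \sqrt{2|A|\,\bigl(\bar{\epsilon}^{\textrm{reg}}_{\textrm{class}} + \bar{\epsilon}^{\textrm{reg}}_{\textrm{regret}}\bigr)},
\end{equation*}
where $\bar{\epsilon}^{\textrm{reg}}_{\textrm{class}}, \bar{\epsilon}^{\textrm{reg}}_{\textrm{regret}}$ are the population analogues of $\hat{\epsilon}_{\textrm{class}}$ and $\hat{\epsilon}_{\textrm{regret}}$.

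The third step is to bridge population and empirical regression regrets. Because $\hat{Q}_i$ is measurable with respect to iterations $1,\ldots,i-1$ and the $m$ samples at iteration $i$ are drawn conditionally i.i.d.\ from the distribution defining $L_i$, the per-iteration deviations $\hat{L}_i(\hat{Q}_i) - L_i(\hat{Q}_i)$ form a martingale difference sequence with increments bounded in terms of $Q_{\max}^2$. Azuma--Hoeffding then yields $|\tfrac{1}{N}\sum_i L_i(\hat{Q}_i) - \tfrac{1}{N}\sum_i \hat{L}_i(\hat{Q}_i)| = O(\sqrt{\log(1/\delta)/(Nm)})$ with probability at least $1-\delta/2$. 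Applying the same argument to the in-hindsight comparator (via a union bound or uniform-convergence argument over the regressor class $\mathcal{Q}$) handles the other direction and gives $\bar{\epsilon}^{\textrm{reg}}_{\textrm{class}} + \bar{\epsilon}^{\textrm{reg}}_{\textrm{regret}} \leq \hat{\epsilon}_{\textrm{class}} + \hat{\epsilon}_{\textrm{regret}} + O(\sqrt{\log(1/\delta)/(Nm)})$ with probability $\geq 1-\delta$. Substituting everything into Theorem~\ref{thmDaggerCostToGo} yields the stated finite-sample bound, and the limit statement follows because the $O(1/N)$ term vanishes while the concentration term vanishes at rate $1/\sqrt{Nm}$, combined with Borel--Cantelli to upgrade to almost-sure convergence as $N \to \infty$.

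The main obstacle is the concentration step. A naive i.i.d.\ Hoeffding bound is inapplicable because $\hat{Q}_i$ is trained on previously collected data, so the martingale structure must be set up carefully. Moreover, the comparator in the definition of $\hat{\epsilon}_{\textrm{class}}$ is the empirical minimizer over $\mathcal{Q}$, so controlling its deviation requires either a finite-class union bound or a covering/complexity argument on $\mathcal{Q}$; hiding this inside the $O(\sqrt{\log(1/\delta)/(Nm)})$ notation (as the theorem statement does) implicitly assumes such a class-complexity term is of the same order.
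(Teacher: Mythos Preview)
Your overall architecture matches the paper: collapse $\epsilon_{\textrm{class}}+\epsilon_{\textrm{regret}}$ into the average cost-sensitive regret of the $\hat{\pi}_i$, apply the argmax--regression reduction pointwise (your $\sqrt{2|A|}$ constant is in fact slightly sharper than the paper's $2\sqrt{|A|}$, which bounds by $2\max_a|\cdot|$ and then $\max\le\sum$), average and apply Jensen, then pass from population to empirical squared loss by a martingale argument.

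The divergence, and the genuine gap in your plan, is in how you handle the comparator in the concentration step. You propose controlling the deviation of the in-hindsight minimizer over $\mathcal{Q}$ via a union bound or uniform convergence, and you then (correctly) worry that this smuggles a class-complexity term into the $O(\sqrt{\log(1/\delta)/Nm})$. The paper does \emph{not} do this and hides no such term. Instead it forms a \emph{single} martingale on the combined increments
\[
Y_{i,j}\;=\;\bigl[\ell_i(\hat{Q}^i)-\hat{\ell}_{i,j}(\hat{Q}^i)\bigr]\;-\;\bigl[\ell_i(Q^*)-\hat{\ell}_{i,j}(Q^*)\bigr],
\]
where $Q^*$ is the \emph{fixed} population Bayes regressor (the expert's true cost-to-go), not any data-dependent minimizer. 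Because $Q^*$ is fixed and $\hat{Q}^i$ is measurable with respect to past iterations, one application of Azuma--Hoeffding gives
\[
\frac{1}{N}\sum_i\bigl[\ell_i(\hat{Q}^i)-\ell_i(Q^*)\bigr]\;\le\;\frac{1}{N}\sum_i\bigl[\hat{\ell}_i(\hat{Q}^i)-\hat{\ell}_i(Q^*)\bigr]+O\!\left(\sqrt{\tfrac{\log(1/\delta)}{Nm}}\right),
\]
with no dependence on $|\mathcal{Q}|$. The remaining move is purely deterministic: let $\tilde{Q}^*$ be the Bayes-optimal regressor \emph{on the realized training data}; since $\tilde{Q}^*$ minimizes the empirical squared loss over all functions, $\sum_i\hat{\ell}_i(\tilde{Q}^*)\le\sum_i\hat{\ell}_i(Q^*)$, so one may replace $Q^*$ by $\tilde{Q}^*$ on the right for free. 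The paper's $\hat{\epsilon}_{\textrm{class}}$ is defined as the empirical excess of the best $\hat{Q}\in\mathcal{Q}$ over $\tilde{Q}^*$ (not over $Q^*$), and with that definition the chain closes without any uniform-convergence step. So your final paragraph misdiagnoses the situation: nothing is being hidden; the trick is to compare against the fixed population Bayes regressor inside the martingale and against the empirical Bayes regressor inside the definition of $\hat{\epsilon}_{\textrm{class}}$.
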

The detailed proof is presented in the Appendix. This result demonstrates how the task performance of the learned policies may be related all the way down to the regret on the regression loss at predicting the observed cost-to-go during training. In particular, it relates task performance to the square root of the online learning regret, on this regression loss, and the regression regret of the best regressor in the class to the bayes-optimal regressor on this training data. \footnote{The appearance of the square root is particular to the use of this reduction to squared-loss regression and implies relative slow convergence to good performance. 
%The square root implies that learning may take significantly longer, i.e. we may need the number of iterations $N$ to be order $O(T^4)$, when $m=1$, to guarantee that the average online regret and generalization error terms are negligible, instead of $O(T^2)$.
Other cost-sensitive classification reductions and regression losses (\emph{e.g.} \cite{Langford05SECOC,Beygelzimer09ECT}) do not introduce this square root and still allow efficient learning.}

\subsection{Discussion}

\todo{JAB: this paragraph needs work.}
\paragraph{\ctgd as a reduction:} \ctgd can be interpreted as a \emph{regret} reduction of imitation learning to no-regret online learning. \footnote{Unfortunately \emph{regret} here has two different meanings common in the literature: the first is in the statistical sense of doing nearly as well as the Bayes-optimal predictor. \cite{Beygelzimer2008} The second use is in the online, adversarial, no-regret sense of competing against any hypothesis on a particular sequence without statistical assumptions. \cite{CesaBianchi06} } We present a statistical regret reduction, as here, performance is related directly to the online, cost-sensitive classification regret on the aggregate dataset. By minimizing cost-to-go, we obtain regret reduction, rather than a weaker error reduction as in \cite{Ross11} when simply minimizing immediate classification loss.

\paragraph{Limitations:} As just mentioned, in cases where the expert is much better than any policy in $\Pi$, the expert's cost-to-go may be a very optimistic estimate of the true future cost after taking a certain action. The approach may fail to learn policies that perform well, even if policies that can perform the task (albeit not as well as the expert) exist in the policy class. Consider again the driving scenario, where one may choose one of two roads to reach a goal: a shorter route that involves driving on a very narrow road next to cliffs on either side, and a longer route which is safer and risks no cliff. If in this example, the expert takes the short route faster and no policy in the class $\Pi$ can drive without falling on the narrow road, but there exists policies that can take the longer road and safely reach the goal, this algorithm would fail to find these policies. The reason for this is that, as we minimize cost-to-go of the expert, we would always favor policies that heads toward the shorter narrow road. But once we are on that road, inevitably at some point we will encounter a scenario where no policies in the class can predict the same low cost-to-go actions as the expert (i.e. making $\epsilon$ large in the previous guarantee). The end result is that we may learn a policy that takes the short narrow road and eventually falls off the cliff, in these pathological scenarios. 
% PSDP may lead to a better policy in this case, but this is unclear. While PSDP would be able to see that the trained policies it learned at later time steps cannot drive well on this narrow road and have high cost-to-go, the state distribution of examples, from the expert policy, would be states starting on this narrow road. It is unclear that PSDP would learn policies that can drive well on the other longer road, based on minimizing cost-to-go on examples from the narrow road. If PSDP do happen to learn policies that can drive on the other longer road, then at the early time steps, PSDP will be able to learn that it is better to take the longer road. But if it doesn't learn to drive well on the longer road, then PSDP would still fail. Note also that the theoretical guarantee of PSDP in this example would be no better than Forward: inevitably at some time $t$ (e.g. near the end of the narrow road), future policies trained by PSDP reaches the goal, but under the state distribution of the expert on the narrow road at time $t$, no policy can avoid falling off the cliff and minimize cost-to-go of future trained policies as well as the expert. This makes the $\epsilon$ term in the guarantee of PSDP large, and the guarantee of PSDP would be similarly bad to Forward.

\paragraph{Comparison to SEARN:} \ctgd shares deep commonalities with SEARN but by providing a reduction to online learning allows much more general schemes to update the policy at each iteration that may be more convenient or efficient rather than the particular stochastic mixing update of SEARN. These include deterministic ones that provide upper convex bounds on performance. In fact, SEARN may be thought as a particular case of \ctgd, where the policy class is the set of distributions over policies, and the online coordinate descent algorithm (Frank-Wolfe) of \cite{Hazan2012projection} is used to update the distribution over policies at each iteration.  Both collect data in a similar fashion at each iteration by executing the current policy up to a random time and then collecting cost-to-go estimates for explored actions in the current state. A distinction is that SEARN collects cost-to-go of the \textbf{current} policy after execution of the random action, instead of the cost-to-go of the expert. Interestingly, SEARN is usually used in practice with the approximation of collecting cost-to-go of the expert \cite{Daume09}, rather than the current policy. Our approach can be seen as providing a theoretical justification for what was previously a
heuristic. %Moreover, the more general view from \ctgd enables alternate update schemes by considering a variety of online learning algorithms that are potentially more convenient or efficient in practice. 

% That is, in cases where there are no policies in $\Pi$ that can perform the task as well as the expert, the cost-to-go estimates of the expert will be optimistic, and this may lead the algorithm to learn policies that encounter situations where no policy in the class can perform the proper behavior of the expert, leading to bad performance. Again this would be reflected in the guarantee by making $\epsilon_{\textrm{class}}$ large. When collecting cost-to-go of the current policy, SEARN, just as with PSDP, may be able to learn that some regions should be avoided when no policy in the class can do well there. Additionally, by training under the distribution of states of the current policy, SEARN avoids the data mismatch issue that can affect PSDP. This suggests that DAGGER should be applied by collecting cost-to-go of the current policy as well, rather than the expert. We present such a version of DAGGER below. However, in this case, to provide good guarantees, we must collect cost-to-go of the current policy, under the state distribution of the expert (and not the current policy as in SEARN). This version of DAGGER is in some sense more similar to PSDP. This is discussed in more details below.

\section{Reinforcement Learning via No-Regret Policy Iteration} \label{secDAggerPSDP}

A relatively simple modification of the above approach enables us to develop a family of sample-based approximate policy iteration
algorithms. Conceptually, we make a swap: from executing the current policy and then switching to the expert to observe a cost-to-go; to, executing the expert policy while collecting cost-to-go of the learner's current policy. We denote this family of algorithms \emph{No-Regret Policy Iteration} \rldagger and detail and analyze it below.

This alternate has similar guarantees to the previous version, but may be preferred when no policy in the class is as good as the expert or when only a distribution of ``important states'' is available. In addition it can be seen to address a general model-free reinforcement learning setting where we simply have a state exploration distribution we can sample from and from which we collect examples of the current policy's cost-to-go. This is similar in spirit to how Policy Search by Dynamic Programming (PSDP) \cite{Bagnell03, Scherrer2014approximate} proceeds, and in some sense, the algorithm we present here provides a generalization of PSDP. However, by learning a stationary policy instead of a non-stationary policy, \rldagger can generalize across time-steps and potentially lead to more efficient learning and practical implementation in problems where $T$ is large or infinite. 

Following \cite{Bagnell03,Kakade02} we assume access to a state exploration distribution $\nu_t$ for all times $t$ in $1,2,\dots,T$. As will be justified by our theoretical analysis, these state exploration distributions should ideally be (close to) that of a (near-)optimal policy in the class $\Pi$. In the context where an expert is present, then this may simply be the distribution of states induced by the expert policy, i.e. $\nu_t = d^t_{\pi^*}$. In general, this may be the state distributions induced by some base policy we want to improve upon, or be determined from prior knowledge of the task.

Given the exploration distributions $\nu_{1:T}$, \rldagger proceeds as follows. At each iteration $n$, it collects cost-to-go examples by sampling uniformly a time $t \in \{1,2,\dots,T\}$, sampling a state $s_t$ from $\nu_t$, and then executes an exploration action $a$ in $s_t$ followed by execution of the current learner's policy $\pi_n$ for time $t+1$ to $T$, to obtain a cost-to-go estimate $(s,a,t,Q)$ of executing $a$ followed by $\pi_n$ in state $s$ at time $t$. \footnote{In the particular case where $\nu_t = d^t_\pi$ of an exploration policy $\pi$, then to sample $s_t$, we would simply execute $\pi$ from time $1$ to $t-1$, starting from the initial state distribution.} Multiple cost-to-go estimates are collected this way and added in dataset $D_n$. After enough data has been collected, we update the learner's policy, to obtain $\pi_{n+1}$, using any no-regret online learning procedure, on the loss defined by the cost-sensitive classification examples in the new data $D_n$. This is iterated for a large number of iterations $N$. Initially, we may start with $\pi_1$ to be any guess of a good policy from the class $\Pi$, or use the expert's cost-to-go at the first iteration, to avoid having to specify an initial policy. This algorithm is detailed in Algorithm \ref{algDaggerPSDP}.

\begin{algorithm}
\begin{algorithmic}
\STATE Initialize $\mathcal{D} \leftarrow \emptyset$, $\hat{\pi}_1$ to any policy in $\Pi$.
\FOR{$i=1$ \textbf{to} $N$}
\STATE Collect $m$ data points as follows:
\FOR{$j=1$ \textbf{to} $m$}
\STATE Sample uniformly $t \in \{1,2,\dots,T\}$.
\STATE Sample state $s_t$ from exploration distribution $\nu_t$.
\STATE Execute some exploration action $a_t$ in current state $s_t$ at time $t$
\STATE Execute $\hat{\pi}_i$ from time $t+1$ to $T$, and observe estimate of cost-to-go $\hat{Q}$ starting at time $t$
\ENDFOR
\STATE Get dataset $\mathcal{D}_i = \{ (s, a, t, \hat{Q}) \}$ of states, actions, time, with current policy's cost-to-go.
\STATE Aggregate datasets: $\mathcal{D} \leftarrow \mathcal{D} \bigcup \mathcal{D}_i$.
\STATE Train cost-sensitive classifier $\hat{\pi}_{i+1}$ on $\mathcal{D}$ 
 ~~~~ (\emph{Alternately: use any online learner on the data-sets $\mathcal{D}_i$ in sequence to get $\hat{\pi}_{i+1}$ })
\ENDFOR
\STATE \textbf{Return} best $\hat{\pi}_i$ on validation.
\end{algorithmic}
\caption{\rldagger Algorithm \label{algDaggerPSDP}}
\end{algorithm}

\subsection{Analysis}

Consider the loss function $L_n$ given to the online learning algorithm within \rldagger at iteration $n$. Assuming infinite data, it assigns the following loss to each policy $\pi \in \Pi$:
$$L_n(\pi) = \mathbb{E}_{t \sim U(1:T), s \sim \nu_t}[Q^{\hat{\pi}_n}_{T-t+1}(s,\pi)].$$
This loss represents the expected cost-to-go of executing $\pi$ immediately for one step followed by current policy $\hat{\pi}_n$, under the exploration distributions $\nu_{1:T}$.

This sequence of losses over the iterations of training corresponds to an online cost-sensitive classification problem, as in the previous \ctgd algorithm. Let $\epsilon_{\textrm{regret}}$ be the average regret of the online learner on this online cost-sensitive classification problem after $N$ iterations of \rldagger:
$$\epsilon_{\textrm{regret}} = \frac{1}{N} \sum_{i=1}^N L_i(\pi_i) - \min_{\pi \in \Pi} \frac{1}{N} \sum_{i=1}^N L_i(\pi).$$

For any policy $\pi \in \Pi$, denote the average $L_1$ or variational distance between $\nu_t$ and $d^t_{\pi}$ over time steps $t$ as $D(\nu,\pi) = \frac{1}{T} \sum_{t=1}^T ||\nu_t - d^t_\pi||_1.$
%$$D(\nu,\pi) = \frac{1}{T} \sum_{t=1}^T ||\nu_t - d^t_\pi||_1.$$
Note that if $\nu_t = d^t_\pi$ for all $t$, then $D(\nu,\pi) = 0$.

Denote by $Q_{\max}$ a bound on cost-to-go (which is always $\leq T C_{\max}$). Denote $\hat{\pi}$ the best policy found by \rldagger over  iterations, and $\overline{\pi}$ the uniform mixture policy over $\pi_{1:N}$ defined as before. Then \rldagger achieves the following guarantee:
\begin{theorem} \label{thmDaggerPSDP}
For any policy $\pi' \in \Pi$:
\begin{displaymath}
J(\hat{\pi}) \leq J(\overline{\pi}) \leq J(\pi') + T \epsilon_{\textrm{regret}} + T Q_{\max} D(\nu,\pi')
\end{displaymath}
If a no-regret online cost-sensitive classification algorithm is used:
%\begin{displaymath}
$\lim_{N \rightarrow \infty} J(\overline{\pi}) \leq J(\pi') + T Q_{\max} D(\nu,\pi')$
%\end{displaymath}
\end{theorem}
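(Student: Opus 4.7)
The plan is to reduce the bound to two ingredients: the \emph{performance difference lemma} of Kakade and Langford, which expresses $J(\hat{\pi}_n) - J(\pi')$ in terms of advantage-like quantities evaluated under a chosen state distribution, and the no-regret guarantee of the underlying online learner on the losses $L_n$ defined in the statement. The key design choice is to orient the performance-difference identity so that the Q-values involved are those of the learner's \emph{current} policy $\hat{\pi}_n$ (matching how $L_n$ is defined), while the states are drawn from the distribution of the \emph{comparator} $\pi'$ (which is what we can approximate by $\nu$).

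Concretely, for any iteration $n$ and any $\pi' \in \Pi$, I would first apply the performance difference lemma in the form
$$J(\hat{\pi}_n) - J(\pi') = T \cdot \mathbb{E}_{t \sim U(1:T),\, s \sim d^t_{\pi'}}\bigl[Q^{\hat{\pi}_n}_{T-t+1}(s, \hat{\pi}_n) - Q^{\hat{\pi}_n}_{T-t+1}(s, \pi')\bigr],$$
which follows from the standard telescoping argument applied with $\pi'$ as the rollout policy and $\hat{\pi}_n$ as the Q-baseline. Then I would replace the sampling distribution $d^t_{\pi'}$ by $\nu_t$ term by term: since $Q^{\hat{\pi}_n}_{T-t+1}$ is bounded by $Q_{\max}$, this substitution incurs an error of at most $Q_{\max} \|\nu_t - d^t_{\pi'}\|_1$ per time step, so at most $T Q_{\max} D(\nu, \pi')$ in aggregate, and the remaining right-hand side is exactly $T[L_n(\hat{\pi}_n) - L_n(\pi')]$ by definition of $L_n$.

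Averaging over $n = 1, \dots, N$ and using $J(\overline{\pi}) = \frac{1}{N} \sum_n J(\hat{\pi}_n)$ for the uniform-mixture policy yields
$$J(\overline{\pi}) - J(\pi') \;\leq\; T \cdot \frac{1}{N}\sum_{n=1}^N \bigl[L_n(\hat{\pi}_n) - L_n(\pi')\bigr] \;+\; T Q_{\max} D(\nu, \pi').$$
Since $\frac{1}{N}\sum_n L_n(\pi') \geq \min_{\pi \in \Pi} \frac{1}{N}\sum_n L_n(\pi)$, the definition of $\epsilon_{\textrm{regret}}$ bounds the averaged difference by $\epsilon_{\textrm{regret}}$, proving the finite-$N$ inequality; the asymptotic statement follows because $\epsilon_{\textrm{regret}} \to 0$ for a no-regret learner. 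The inequality $J(\hat{\pi}) \leq J(\overline{\pi})$ is immediate since $\hat{\pi}$ is the minimum-cost member of $\hat{\pi}_{1:N}$.

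The main obstacle I expect is purely conceptual: selecting the correct orientation of the performance-difference identity. Applying it with $\hat{\pi}_n$ in the ``state-distribution'' slot would force us to control $\|\nu_t - d^t_{\hat{\pi}_n}\|_1$, a quantity that varies with the (learner-chosen) iterate and cannot be bounded a priori. Flipping the roles so that $\pi'$ governs the state distribution is precisely what makes the fixed quantity $D(\nu, \pi')$ the right error term to appear in the bound; the rest is a mechanical combination of an $L_1$-vs-$L_\infty$ distribution-mismatch inequality, linearity of expectation, and the definition of online regret.
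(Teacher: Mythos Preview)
Your proposal is correct and follows essentially the same route as the paper: apply the performance-difference lemma in the orientation where $\pi'$ supplies the state distribution and $\hat{\pi}_n$ supplies the $Q$-function, replace $d^t_{\pi'}$ by $\nu_t$ via an $L_1$ bound (paying $Q_{\max}\|\nu_t - d^t_{\pi'}\|_1$ since the integrand has range $2Q_{\max}$), average over iterations, and invoke the regret definition. Your commentary on why the other orientation fails is exactly the conceptual crux.
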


\rldagger thus finds policies that are as good as any other policy $\pi' \in \Pi$ whose state distribution $d^t_{\pi'}$ is close to $\nu_t$ on average over time $t$.  Importantly, if $\nu_{1:T}$ corresponds to the state distribution of an optimal policy in class $\Pi$, then this theorem guarantees that \rldagger will find an optimal policy (within the class $\Pi$) in the limit.

This theorem provides a similar performance guarantee to the results for PSDP presented in \cite{Bagnell03}. \rldagger has the advantage of learning a single policy for test execution instead one at each time allowing for improved generalization and more efficient learning. \rldagger imposes stronger requirements: it uses a no-regret online cost-sensitive classification procedure instead of simply a cost-sensitive supervised learner. For finite policy classes $\Pi$, or using reductions of cost-sensitive classification as mentioned previously, we may still obtain convex online learning problems for which efficient no-regret strategies exist or use the simple aggregation of data-sets with any sufficiently stable batch learner. \cite{Ross12b,Saha2012interplay} 

The result presented here can be interpreted as a reduction of model-free reinforcement learning to no-regret online learning. It is a regret reduction, as performance is related directly to the online regret at the cost-sensitive classification task. However  performance is strongly limited by the quality of the exploration distribution. \footnote{One would naturally consider adapting the exploration distributions $\nu_{1:T}$ over the iterations of training. It can be shown that if $\nu^i_{1:T}$ are the exploration distributions at iteration $i$, and we have a mechanism for making $\nu^i_{1:T}$ converge to the state distributions of an optimal policy in $\Pi$ as $i \rightarrow \infty$, then we would always be guaranteed to find an optimal policy in $\Pi$. Unfortunately, no known method can guarantee this.}

\section{Discussion and Future Work}

\paragraph{Contribution.} The work here provides theoretical support for two seemingly unrelated empirical observations. First, and perhaps most crucially, much anecdotal evidence suggests that approximate policy iteration-- and especially online variants \cite{sutton00}-- is more effective and stable than theory and counter-examples to convergence might suggest. This cries out for some explanation; we contend that it can be understood as such online algorithms often enjoy no-regret guarantees or share similar stability properties than can ensure relative performance guarantees. 

Similarly, practical implementation of imitation learning-for-structured-prediction methods like SEARN rely on what was previously considered a heuristic of using the expert demonstrator as an estimate of the future cost-to-go. The resulting good performance can be understood as a consequence of this heuristic being a special case of \ctgd where the online Frank-Wolfe algorithm \cite{Hazan2012projection} is used to choose policies. Moreover, stochastic mixing is but one of several approaches to achieving good online performance and deterministic variants have proven more effective in practice. \cite{Ross11b}

The resulting algorithms make suggestions for batch approaches as well: they suggest, for instance, that approximate policy iteration procedures (as well as imitation learning ones) are likely to be more stable and effective if they train not only on the cost-to-go of the most recent policy but also on previous policies. At first this may seem counter-intuitive, however, it prevents the oscillations and divergences that at times plague batch approximate dynamic programming algorithms by ensuring that each learned policy is good across many states.

From a broad point of view, this work forms a piece of a growing picture that  online algorithms and no-regret analyses-- in contrast with the traditional \emph{i.i.d.} or batch analysis--  are important for understanding learning with application to control and decision making \cite{Ross13,Ross12,Ross11b,Ross11}. At first glance, online learning seems concerned with a very different adversarial setting. By understanding these methods as attempting to ensure both good performance and robust, stable learning across iterations \cite{Ross12b,Saha2012interplay}, they become a natural tool for understanding the dynamics of interleaving learning and execution when our primary concern is generalization performance.
\todo{JAB: rewrite}

\paragraph{Limitations.} It is important to note that any method relying on cost-to-go estimates can be impractical as collecting each estimate for a single state-action pair may involve executing an entire trajectory. In many settings, minimizing imitation loss with \DAGGER \cite{Ross11}, is more practical as we can observe the action chosen by the expert in \emph{every} visited state along a trajectory and thus collect $T$ data points per trajectory instead of single one. This is less crucial in structured prediction settings where the cost-to-go of the expert may often be quickly computed which has lead to the success of the heuristic analyzed here. A potential combination of the two approaches, where first simple imitation loss minimization provides a reasonable policy, and then this is refined using \ctgd (e.g. through additional gradient descent steps) thus using fewer (expensive) iterations.

In the reinforcement learning setting, the bound provided is as strong as that provided by \cite{Bagnell03,Kakade2003thesis} for an arbitrary policy class.
However, as $T Q_{\max}$ is generally $O(T^2)$, this only provides meaningful guarantees when $d^t_{\pi'}$ is very close to $\nu_t$ (on average over time $t$).
Previous methods like \cite{Bagnell03,Kakade02,Scherrer2014approximate} provide a much stronger, \emph{multiplicative} error guarrantee when we consider competing against the bayes optimal policy in a fully observed MDP. It is not obvious how the current algorithm and analysis can extend to that variant of the bound.

\paragraph{Future Work.} Much work remains to be done: there are a wide variety of no-regret learners and their practical trade-offs are almost completely open. Future work must explore this set to identify which methods are most effective in practice.

% \section{Experiments}

% \paragraph{Imitation}

% \paragraph{RL in Tetris}

% Take existing features from previous . Get 100M or so lines

%\bibliographystyle{icml2013}

%\section*{References}
\small{
\bibliography{biblio}
\bibliographystyle{plain}
}
\newpage
\section*{Appendix: Proofs and Detailed Bounds}

In this appendix, we provide the proofs and detailed analysis of the algorithms for imitation learning and reinforcement learning provided in the main document. 

\section*{Lemmas}

We begin with a classical and useful general lemma that is needed for bounding the expected loss under different distributions. This will be used several times throughout. Here this will be useful for bounding the expected loss under the state distribution of $\hat{\pi}$ (which optional queries the expert a fraction of the time during it's execution) in terms of the expected loss under the state distribution of $\pi_i$:

\begin{lemma} \label{lemExpL1}
Let $P$ and $Q$ be any distribution over elements $x \in \mathcal{X}$, and $f : \mathcal{X} \rightarrow \mathbb{R}$, any bounded function such that $f(x) \in [a,b]$ for all $x \in \mathcal{X}$. Let the range $r =b-a$. Then $|\mathbb{E}_{x \sim P}[f(x)] - \mathbb{E}_{x \sim Q}[f(x)]| \leq \frac{r}{2} ||P-Q||_1$
\end{lemma}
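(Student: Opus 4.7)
The plan is to reduce the claim to the standard total variation bound by re-centering $f$. The key observation is that because $P$ and $Q$ are probability distributions, adding or subtracting any constant to $f$ leaves the difference $\mathbb{E}_P[f] - \mathbb{E}_Q[f]$ unchanged:
\[
\mathbb{E}_{x \sim P}[f(x)] - \mathbb{E}_{x \sim Q}[f(x)] \;=\; \mathbb{E}_{x \sim P}[f(x) - c] - \mathbb{E}_{x \sim Q}[f(x) - c]
\]
for any constant $c \in \mathbb{R}$. The freedom to choose $c$ is what introduces the factor of $1/2$ in the final bound.

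First I would set $c = (a+b)/2$ and define $g(x) = f(x) - c$. Since $f(x) \in [a,b]$, the centered function satisfies $|g(x)| \leq (b-a)/2 = r/2$ for all $x \in \mathcal{X}$. This is the tightest uniform bound available; any other choice of $c$ would give a worse constant.

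Next I would write the difference of expectations as a single integral against the signed measure $P - Q$, and then bound it in absolute value:
\[
\left|\mathbb{E}_P[f] - \mathbb{E}_Q[f]\right|
\;=\; \left|\int g(x)\, (dP(x) - dQ(x))\right|
\;\leq\; \int |g(x)|\, |dP(x) - dQ(x)|
\;\leq\; \frac{r}{2} \int |dP(x) - dQ(x)|
\;=\; \frac{r}{2}\,\|P - Q\|_1,
\]
using the uniform bound on $|g|$ in the penultimate step and the definition of the $L_1$ (total variation) distance in the final equality. The same argument works verbatim in the discrete case with sums replacing integrals.

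There is no real obstacle here; the only subtlety is recognizing that naively bounding $|f(x)| \leq \max(|a|,|b|)$ would yield the weaker constant $\max(|a|,|b|)$ instead of $r/2$, and that the centering trick recovers the sharp constant precisely because $P$ and $Q$ integrate to the same total mass.
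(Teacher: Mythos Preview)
Your proof is correct and follows essentially the same approach as the paper: both exploit the fact that subtracting a constant leaves the difference of expectations unchanged (since $P$ and $Q$ have equal total mass), then choose $c=(a+b)/2$ to make $|f-c|\le r/2$ before applying the $L_1$ bound. The only cosmetic difference is that you fix the optimal $c$ up front while the paper leaves $c$ free and minimizes at the end.
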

\begin{proof}
We provide the proof for $\mathcal{X}$ discrete, a similar argument can be carried for $\mathcal{X}$ continuous, using integrals instead of sums.
\begin{displaymath}
\begin{array}{rl}
\multicolumn{2}{l}{|\mathbb{E}_{x \sim P}[f(x)] - \mathbb{E}_{x \sim Q}[f(x)]|}\\
= & | \sum_x P(x) f(x) - Q(x) f(x) |\\
= & | \sum_x f(x) (P(x) - Q(x)) |\\
\end{array}
\end{displaymath}
Additionally, since for any real $c \in \mathbb{R}$, $\sum_x P(x) c = \sum_{x} Q(x) c$, then we have for any $c$:
\begin{displaymath}
\begin{array}{rl}
\multicolumn{2}{l}{| \sum_x f(x) (P(x) - Q(x)) |}\\
= & | \sum_x (f(x) - c) (P(x) - Q(x)) |\\
\leq & \sum_x |f(x)-c| |P(x) - Q(x)|\\
\leq & \max_x |f(x)-c|  \sum_x |P(x) - Q(x)|\\
= & \max_x |f(x)-c|  ||P-Q||_1\\
\end{array}
\end{displaymath}
This holds for all $c \in \mathbb{R}$. This upper bound is minimized for $c = a + \frac{r}{2}$, making $\max_x |f(x)-c| \leq \frac{r}{2}$. This proves the lemma. 
\end{proof}

The $L_1$ distance between the distribution of states encountered by $\hat{\pi}_i$, the policy chosen by the online learner, and $\pi_i$, the policy used to collect data that continues to execute the expert's actions with probability $\beta_i$ is bounded as follows:%
\begin{lemma} \label{lemL1Dist}
$||d_{\pi_i} - d_{\hat{\pi}_i}||_1 \leq 2 \min( 1, T \beta_i )$.
\end{lemma}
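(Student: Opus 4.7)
The plan is to exploit the mixture structure of $\pi_i = \beta_i \pi^* + (1-\beta_i)\hat{\pi}_i$ via a coupling argument. At every time step, independently, the mixed policy $\pi_i$ draws its action from $\hat{\pi}_i$ with probability $1-\beta_i$ and from $\pi^*$ with probability $\beta_i$. Let $E_t$ be the event that in all of the first $t-1$ draws we chose the $\hat{\pi}_i$ branch; then $\Pr(E_t) = (1-\beta_i)^{t-1}$, and conditional on $E_t$ the two processes have executed the same sequence of actions, so the distribution of the state at time $t$ agrees with $d_{\hat{\pi}_i}^t$.

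Consequently I can write $d_{\pi_i}^t$ as the mixture $(1-\beta_i)^{t-1} d_{\hat{\pi}_i}^t + (1-(1-\beta_i)^{t-1}) q_t$ for some state distribution $q_t$. This immediately gives
\[
\|d_{\pi_i}^t - d_{\hat{\pi}_i}^t\|_1 \leq 2\bigl(1-(1-\beta_i)^{t-1}\bigr),
\]
since any two probability distributions have $L_1$ distance at most $2$. Applying Bernoulli's inequality $(1-\beta_i)^{t-1} \geq 1-(t-1)\beta_i$ and combining with the trivial bound of $2$ yields
\[
\|d_{\pi_i}^t - d_{\hat{\pi}_i}^t\|_1 \leq 2 \min\bigl(1,(t-1)\beta_i\bigr).
\]

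Finally, since $d_\pi = \frac{1}{T}\sum_{t=1}^T d_\pi^t$, the triangle inequality and averaging give
\[
\|d_{\pi_i} - d_{\hat{\pi}_i}\|_1 \leq \frac{1}{T}\sum_{t=1}^T \|d_{\pi_i}^t - d_{\hat{\pi}_i}^t\|_1 \leq 2 \min(1, T\beta_i),
\]
using $\frac{1}{T}\sum_{t=1}^T (t-1)\beta_i = \frac{(T-1)\beta_i}{2} \leq T\beta_i$ for the nontrivial side and the per-step bound of $2$ for the saturated side. There is no serious obstacle here; the only subtlety worth being careful about is setting up the coupling so that $E_t$ is independent of the past trajectory, which is what lets one identify the conditional state distribution with $d_{\hat{\pi}_i}^t$ and extract the clean mixture decomposition above.
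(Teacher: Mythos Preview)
Your proof is correct and follows essentially the same mixture/coupling argument as the paper: decompose the state distribution according to whether the $\beta_i$-coin ever selected the expert, bound the $L_1$ distance of two distributions by $2$, and apply Bernoulli's inequality $(1-\beta_i)^k \geq 1-k\beta_i$. The only minor difference is granularity: you carry out the decomposition at each time step $t$ (conditioning on no expert in the first $t-1$ flips, yielding the factor $(1-\beta_i)^{t-1}$) and then average, whereas the paper applies the decomposition directly to the time-averaged distribution $d_{\pi_i}$ (conditioning on no expert over all $T$ steps, yielding $(1-\beta_i)^T$); your route incidentally gives the slightly sharper intermediate bound $(T-1)\beta_i$ before relaxing to $2T\beta_i$.
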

\begin{proof}
Let $d$ the distribution of states over $T$ steps conditioned on $\pi_i$ picking the expert $\pi^*$ at least once over $T$ steps. Since $\pi_i$ always executes $\hat{\pi}_i$ (never executes the expert action) over $T$ steps with probability $(1-\beta_i)^T$ we have $d_{\pi_i} = (1-\beta_i)^T d_{\hat{\pi}_i} + (1-(1-\beta_i)^T) d$. Thus
\begin{displaymath}
\begin{array}{rl}
\multicolumn{2}{l}{||d_{\pi_i} - d_{\hat{\pi}_i}||_1}\\
= & (1-(1-\beta_i)^T) ||d - d_{\hat{\pi}_i}||_1\\
\leq & 2 (1-(1-\beta_i)^T)\\
\leq & 2 T \beta_i \\
\end{array}
\end{displaymath}
The last inequality follows from the fact that $(1-\beta)^T \geq 1 - \beta T$ for any $\beta \in [0,1]$. Finally, since for any 2 distributions $p$, $q$, we always have $||p - q||_1 \leq 2$, then $||d_{\pi_i} - d_{\hat{\pi}_i}||_1 \leq 2 \min( 1, T \beta_i )$.
\end{proof}

Below we use the \emph{performance difference lemma} \cite{Bagnell03,Kakade02,Kakade2003thesis} that is useful to bound the change in total cost-to-go. This general result bounds the difference in performance of any two policies.  We present this results and its proof here for completeness.

\begin{lemma} \label{lemPerfDiff}
Let $\pi$ and $\pi'$ be any two policy and denote $V'_t$ and $Q'_t$ the $t$-step value function and $Q$-value function of policy $\pi'$ respectively, then:
\begin{displaymath}
J(\pi) - J(\pi') = T \Exp_{t \sim U(1:T), s \sim d^t_{\pi}}[Q'_{T-t+1}(s,\pi) - V'_{T-t+1}(s)]
\end{displaymath}
for $U(1:T)$ the uniform distribution on the set $\{1,2,\dots,T\}$.
\end{lemma}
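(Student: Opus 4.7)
The plan is to prove the performance difference lemma by a telescoping argument over hybrid policies that interpolate between $\pi'$ and $\pi$ step by step. Concretely, for each $k \in \{0,1,\dots,T\}$, define the quantity
\[
A_k \;=\; \text{expected total cost of running } \pi \text{ for the first } k \text{ steps, then } \pi' \text{ for the remaining } T-k \text{ steps},
\]
starting from the initial state distribution. Then by construction $A_0 = J(\pi')$ and $A_T = J(\pi)$, so $J(\pi)-J(\pi') = \sum_{k=1}^{T} (A_k - A_{k-1})$, and the goal reduces to showing $A_k - A_{k-1} = \Exp_{s \sim d^k_\pi}[Q'_{T-k+1}(s,\pi) - V'_{T-k+1}(s)]$.

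First I would observe that $A_k$ and $A_{k-1}$ couple perfectly on the first $k-1$ steps (both follow $\pi$) and on the last $T-k$ steps conditional on the state at step $k+1$ (both follow $\pi'$ there). Hence they differ \emph{only} in the action chosen at step $k$: in $A_k$ the action is $\pi$, in $A_{k-1}$ it is $\pi'$. Because the first $k-1$ steps are generated by $\pi$, the state $s_k$ at step $k$ is distributed exactly as $d^k_\pi$. Conditioning on $s_k$, the remaining expected cost from step $k$ onward is, by definition of $Q'$ and $V'$ with $T-k+1$ steps remaining, equal to $Q'_{T-k+1}(s_k,\pi)$ for $A_k$ and to $V'_{T-k+1}(s_k)$ for $A_{k-1}$ (using that following $\pi'$ for all remaining $T-k+1$ steps starting at $s_k$ is precisely $V'_{T-k+1}(s_k)$). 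This gives the per-step identity.

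Summing the telescoping equalities yields
\[
J(\pi) - J(\pi') \;=\; \sum_{k=1}^{T} \Exp_{s \sim d^k_\pi}\!\left[Q'_{T-k+1}(s,\pi) - V'_{T-k+1}(s)\right],
\]
and finally rewriting the sum as $T$ times a uniform average over $k \in \{1,\dots,T\}$ gives the claimed form with $t \sim U(1{:}T)$ and $s \sim d^t_\pi$.

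The only delicate step is the conditional argument in the middle: I need to be careful that $s_k \sim d^k_\pi$ truly holds (which relies on the first $k-1$ steps being governed by $\pi$ in \emph{both} $A_k$ and $A_{k-1}$) and that the ``remaining $T-k+1$ steps'' interpretation matches the indexing convention used for $Q'_{\cdot}$ and $V'_{\cdot}$ in the paper (i.e., that $Q'_{T-k+1}(s,a)$ denotes the cost of one action $a$ plus $T-k$ steps of $\pi'$). Modulo fixing these bookkeeping conventions, the argument is a clean telescoping sum and I do not expect any other obstacle; the argument applies equally to stochastic or deterministic $\pi,\pi'$ provided $Q'_{T-k+1}(s,\pi)$ is interpreted as the expectation over actions drawn from $\pi$.
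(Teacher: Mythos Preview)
Your proposal is correct and is essentially identical to the paper's own proof: the paper defines the hybrid policy $\pi_k$ that runs $\pi$ for the first $k$ steps and $\pi'$ thereafter (your $A_k$ is exactly $J(\pi_k)$), telescopes $J(\pi)-J(\pi')=\sum_{t=1}^T[J(\pi_t)-J(\pi_{t-1})]$, identifies each summand as $\Exp_{s\sim d^t_\pi}[Q'_{T-t+1}(s,\pi)-V'_{T-t+1}(s)]$, and rewrites the sum as $T$ times the uniform average. Your discussion of the coupling at step $k$ and the indexing of $Q'_{T-k+1}$ is also consistent with the paper's conventions.
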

\begin{proof}
Let $\pi_{t}$ denote the non-stationary policy that executes $\pi$ in the first $t$ time steps, and then switches to execute $\pi'$ at time $t+1$ to $T$. Then we have $J(\pi) = J(\pi_{T})$ and $J(\pi') = J(\pi_0)$. Thus:
\begin{displaymath}
\begin{array}{rl}
\multicolumn{2}{l}{J(\pi) - J(\pi')}\\
= & \sum_{t=1}^T [J(\pi_t) - J(\pi_{t-1})]\\
= & \sum_{t=1}^T [ \Exp_{s \sim d^t_{\pi}}[Q'_{T-t+1}(s,\pi) - V'_{T-t+1}(s)] ]\\
= & T \Exp_{t \sim U(1:T),s \sim d^t_{\pi}}[Q'_{T-t+1}(s,\pi) - V'_{T-t+1}(s)]\\
\end{array}
\end{displaymath}
\end{proof}

\section*{\ctgd Reduction Analysis}

Let $\epsilon_{\textrm{class}} = \min_{\pi \in \Pi} \frac{1}{N} \sum_{i=1}^N \Exp_{t \sim U(1:T), s \sim d^t_{\pi_i}}[ Q_{T-t+1}^*(s,\pi) - \min_a Q_{T-t+1}^*(s,a) ]$ denote the minimum expected cost-sensitive classification regret achieved by policies in the class $\Pi$ on all the data over the $N$ iterations of training. Denote the online learning average regret on the cost-to-go examples of the sequence of policies chosen by \ctgd, $\epsilon_{\textrm{regret}} = \frac{1}{N} [ \sum_{i=1}^N \ell_i(\hat{\pi}_i) - \min_{\pi \in \Pi}  \sum_{i=1}^N \ell_i(\pi) ]$, where $\ell_i(\pi) = \Exp_{t \sim U(1:T), s \sim d^t_{\pi_i}}[Q_{T-t+1}^*(s,\pi)]$. Assume the cost-to-go of the expert $Q^*$ is non-negative and bounded by $Q^*_{\max}$, and that $\beta_i$ are chosen such that $\beta_i \leq (1-\alpha)^{i-1}$ for some $\alpha$. Then we have the following:

\begin{theorem} 
After $N$ iterations of \ctgd:
\begin{displaymath}
J(\hat{\pi}) \leq J(\overline{\pi}) \leq J(\pi^*) + T[\epsilon_{\textrm{class}} + \epsilon_{\textrm{regret}}] + O\left(\frac{Q^*_{\max} T \log T}{\alpha N}\right).
\end{displaymath}
Thus if a no-regret online algorithm is used to pick the sequence of policies $\hat{\pi}_{1:N}$, then as the number of iterations $N \rightarrow \infty$:
\begin{displaymath}
\lim_{N \rightarrow \infty} J(\overline{\pi}) \leq J(\pi^*) + T \epsilon_{\textrm{class}}
\end{displaymath}
\end{theorem}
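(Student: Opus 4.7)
The plan is to apply the performance difference lemma, then transfer the expectations from the learner's own state distribution to the data-collection distribution (incurring an $O(\beta_i)$ penalty), and finally invoke the online regret guarantee together with the definition of $\epsilon_{\textrm{class}}$.

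The first inequality $J(\hat{\pi}) \leq J(\overline{\pi})$ is immediate since $\hat{\pi}$ is picked as the best of the sequence while $\overline{\pi}$ averages it. For the interesting bound, Lemma~\ref{lemPerfDiff} applied to $\overline{\pi}$ and $\pi^*$, combined with $d^t_{\overline{\pi}}=\tfrac{1}{N}\sum_i d^t_{\hat{\pi}_i}$ (because $\overline{\pi}$ commits to one $\hat{\pi}_i$ for the entire trajectory), gives
\begin{displaymath}
J(\overline{\pi})-J(\pi^*) \;=\; \frac{T}{N}\sum_{i=1}^N \Exp_{t\sim U(1:T),\,s\sim d^t_{\hat{\pi}_i}}\bigl[Q^*_{T-t+1}(s,\hat{\pi}_i)-V^*_{T-t+1}(s)\bigr].
\end{displaymath}
Since $Q^*-V^*\in[0,Q^*_{\max}]$, Lemma~\ref{lemExpL1} and Lemma~\ref{lemL1Dist} let me replace each $d^t_{\hat{\pi}_i}$ by $d^t_{\pi_i}$ at cost at most $Q^*_{\max}\min(1,T\beta_i)$ per iteration. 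The resulting first term is exactly $\tfrac{T}{N}\sum_i\ell_i(\hat{\pi}_i) - \tfrac{T}{N}\sum_i \Exp_{t,s\sim d^t_{\pi_i}}[V^*_{T-t+1}(s)]$.

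Next I would apply the online learning regret, $\tfrac{1}{N}\sum_i\ell_i(\hat{\pi}_i)\leq \min_{\pi\in\Pi}\tfrac{1}{N}\sum_i\ell_i(\pi)+\epsilon_{\textrm{regret}}$, and plug in the particular $\pi^\dagger$ achieving $\epsilon_{\textrm{class}}$. Because $V^*_{T-t+1}(s)=Q^*_{T-t+1}(s,\pi^*)\geq \min_a Q^*_{T-t+1}(s,a)$, the definition of $\epsilon_{\textrm{class}}$ yields
\begin{displaymath}
\frac{1}{N}\sum_i \ell_i(\pi^\dagger)-\frac{1}{N}\sum_i\Exp[V^*_{T-t+1}(s)] \;\leq\; \epsilon_{\textrm{class}}.
\end{displaymath}
Combining gives the $T[\epsilon_{\textrm{class}}+\epsilon_{\textrm{regret}}]$ contribution.

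The final step is to bound the distributional-shift residual $\tfrac{TQ^*_{\max}}{N}\sum_{i=1}^N\min(1,T\beta_i)$ using $\beta_i\leq(1-\alpha)^{i-1}$. I would split the sum at $i_0=\lceil\log T/\alpha\rceil$: for $i<i_0$ use the trivial bound $\min(\cdot)\leq 1$, contributing $i_0=O(\log T/\alpha)$; for $i\geq i_0$ use $T\beta_i\leq T(1-\alpha)^{i-1}$ whose tail geometric sum is $O(1/\alpha)$. Thus $\sum_i\min(1,T\beta_i)=O(\log T/\alpha)$, producing the $O(Q^*_{\max}T\log T/(\alpha N))$ term. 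The limit statement follows immediately because $\epsilon_{\textrm{regret}}\to 0$ for any no-regret learner and the $1/N$ term vanishes.

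The main obstacle I anticipate is the bookkeeping around moving between $d^t_{\hat{\pi}_i}$ (which appears from the performance difference lemma) and $d^t_{\pi_i}$ (under which the online losses $\ell_i$ are defined), together with the careful splitting argument that produces the $\log T$ factor; everything else is essentially substitution into the definitions of $\epsilon_{\textrm{class}}$ and $\epsilon_{\textrm{regret}}$.
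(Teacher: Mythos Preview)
Your proposal is correct and follows essentially the same route as the paper: apply the performance difference lemma to each $\hat{\pi}_i$, shift from $d^t_{\hat{\pi}_i}$ to $d^t_{\pi_i}$ via Lemmas~\ref{lemExpL1} and~\ref{lemL1Dist}, invoke the definitions of $\epsilon_{\textrm{regret}}$ and $\epsilon_{\textrm{class}}$, and bound $\sum_i\min(1,T\beta_i)$ by the same geometric split at the index where $\beta_i$ crosses $1/T$. One small caveat: $Q^*-V^*$ need not lie in $[0,Q^*_{\max}]$ since the expert is not assumed optimal, but its range is still at most $2Q^*_{\max}$, so Lemma~\ref{lemExpL1} gives the same penalty up to a constant absorbed in the $O(\cdot)$.
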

\begin{proof}
For every policy $\hat{\pi}_i$, we have:
\begin{displaymath}
\begin{array}{rl}
\multicolumn{2}{l}{J(\hat{\pi}_i) - J(\pi^*)}\\
= & T \Exp_{t \sim U(1:T),s \sim d^t_{\hat{\pi}_i}}[Q^*_{T-t+1}(s,\hat{\pi}_i) - V^*_{T-t+1}(s)]\\
= & \sum_{t=1}^T \Exp_{s \sim d^t_{\hat{\pi}_i}}[Q^*_{T-t+1}(s,\hat{\pi}_i) - V^*_{T-t+1}(s)]\\
\leq & \sum_{t=1}^T \Exp_{s \sim d^t_{\pi_i}}[Q^*_{T-t+1}(s,\hat{\pi}_i) - V^*_{T-t+1}(s)] +  Q^*_{\max} \sum_{t=1}^T ||d^t_{\pi_i} - d^t_{\hat{\pi}_i}||_1\\
\leq & \sum_{t=1}^T \Exp_{s \sim d^t_{\pi_i}}[Q^*_{T-t+1}(s,\hat{\pi}_i) - V^*_{T-t+1}(s)] +  2 Q^*_{\max} \sum_{t=1}^T \min(1,t \beta_i)\\
\leq & \sum_{t=1}^T \Exp_{s \sim d^t_{\pi_i}}[Q^*_{T-t+1}(s,\hat{\pi}_i) - V^*_{T-t+1}(s)] +  2 T Q^*_{\max} \min(1, T \beta_i)\\
= & T \Exp_{t \sim U(1:T),s \sim d^t_{\pi_i}}[Q^*_{T-t+1}(s,\hat{\pi}_i) - V^*_{T-t+1}(s)] +  2 T Q^*_{\max} \min(1, T \beta_i)\\
\end{array}
\end{displaymath}
where we use lemma \ref{lemPerfDiff} in the first equality, lemma \ref{lemExpL1} in the first inequality, and a similar argument to lemma \ref{lemL1Dist} for the second inequality.

Since $\beta_i$ are non-increasing, define $n_\beta$ the largest $n \leq N$ such that $\beta_n > 1/T$. Then:
\begin{displaymath}
\begin{array}{rl}
\multicolumn{2}{l}{J(\overline{\pi}) - J(\pi^*)}\\
= & \frac{1}{N} \sum_{i=1}^N [J(\hat{\pi}_i) - J(\pi^*)]\\
\leq & \frac{1}{N} \sum_{i=1}^N [ T \Exp_{t \sim U(1:T),s \sim d^t_{\pi_i}}[Q^*_{T-t+1}(s,\hat{\pi}_i) - V^*_{T-t+1}(s)] +  2 T Q^*_{\max} \min(1, T \beta_i) ]\\
=  & T [\min_{\pi \in \Pi} \frac{1}{N} \sum_{i=1}^N \Exp_{t \sim U(1:T),s \sim d^t_{\pi_i}}[Q^*_{T-t+1}(s,\pi)-V^*_{T-t+1}(s)]] + T \epsilon_{\textrm{regret}} \\
& + \frac{2 T Q^*_{\max}}{N} [ n_\beta + T \sum_{i=n_\beta+1}^N \beta_i ]\\
\leq & T [\min_{\pi \in \Pi} \frac{1}{N} \sum_{i=1}^N \Exp_{t \sim U(1:T),s \sim d^t_{\pi_i}}[Q^*_{T-t+1}(s,\pi)-\min_a Q^*_{T-t+1}(s,a)] + T \epsilon_{\textrm{regret}} \\
& + \frac{2 T Q^*_{\max}}{N} [ n_\beta + T \sum_{i=n_\beta+1}^N \beta_i ]\\
=  & T \epsilon_{\textrm{class}} + T \epsilon_{\textrm{regret}} + \frac{2 T Q^*_{\max}}{N} [ n_\beta + T \sum_{i=n_\beta+1}^N \beta_i ]\\
\end{array}
\end{displaymath}

Again, $J(\hat{\pi}) \leq J(\overline{\pi})$ since the minimum is always better than the average, i.e. $\min_i J(\hat{\pi}_i) \leq \frac{1}{N} \sum_{i=1}^N J(\hat{\pi}_i)$. Finally, we have that when $\beta_i = (1-\alpha)^{i-1}$, $[ n_\beta + T \sum_{i=n_\beta+1}^N \beta_i ] \leq \frac{\log(T) + 2}{\alpha}$. This proves the first part of the theorem.

The second part follows immediately from the fact that  $\epsilon_{\textrm{regret}} \rightarrow 0$ as $N \rightarrow \infty$, and similarly for the extra term $O\left(\frac{Q^*_{\max} T \log T}{\alpha N}\right)$.
\end{proof}

\section*{Finite Sample \ctgd with Q-function approximation}

We here consider the finite sample case where actions are explored uniformly randomly and the reduction of cost-sensitive classification to squared loss regression is used. We consider learning an estimate Q-value function $\hat{Q}$ of the expert's cost-to-go, and we consider a general case where the cost-to-go predictions may depend on features $f(s,a,t)$ of the state $s$, action $a$ and time $t$, e.g. $\hat{Q}$ could be a linear regressor s.t. $\hat{Q}_{T-t+1}(s,a) = w^{\top} f(s,a,t)$ is the estimate of the cost-to-go $Q^*_{T-t+1}(s,a)$, and $w$ are the parameters of the linear regressor we learn. Given such estimates $\hat{Q}$, we consider executing the policy $\hat{\pi}$, such that in state $s$ at time $t$, $\hat{\pi}(s,t) = \min_{a \in A} \hat{Q}_{T-t+1}(s,a)$.

\begin{theorem}
After $N$ iterations of \ctgd, collecting $m$ regression examples $(s,a,t,Q)$ per iteration, guarantees that with probability at least 1-$\delta$:
\begin{displaymath}
J(\hat{\pi}) \leq J(\overline{\pi}) \leq J(\pi^*) + 2 \sqrt{|A|}T \sqrt{\hat{\epsilon}_{\textrm{class}} + \hat{\epsilon}_{\textrm{regret}} + O(\sqrt{\log(1/\delta)/Nm})} + O\left(\frac{Q_{\max} T \log T}{\alpha N}\right).
\end{displaymath}
Thus if a no-regret online algorithm is used to pick the sequence of regressors $\hat{Q}_{1:N}$, then as the number of iterations $N \rightarrow \infty$, with probability 1:
\begin{displaymath}
\lim_{N \rightarrow \infty} J(\overline{\pi}) \leq J(\pi^*) + 2 \sqrt{|A|}T \sqrt{\hat{\epsilon}_{\textrm{class}}}
\end{displaymath}
\end{theorem}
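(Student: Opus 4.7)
The plan is to extend Theorem~\ref{thmDaggerCostToGo} by two additional reductions: an argmax reduction from cost-sensitive classification to squared-loss regression, and a martingale concentration step from population to empirical regression regret. The outline is to (i) re-run the performance-difference decomposition of Theorem~\ref{thmDaggerCostToGo} so that $J(\overline\pi)-J(\pi^*)\leq T\bar\epsilon_{\mathrm{class}} + O(Q_{\max}T\log T/(\alpha N))$, where $\bar\epsilon_{\mathrm{class}}$ is the average \emph{expected} cost-sensitive classification regret of the learned policies $\hat\pi_i$ under the sampling distributions $d^t_{\pi_i}$; then (ii) bound $\bar\epsilon_{\mathrm{class}}\leq 2\sqrt{|A|}\sqrt{\bar\epsilon_{\mathrm{reg}}}$ where $\bar\epsilon_{\mathrm{reg}}$ is the average population squared-regression regret; and finally (iii) bound $\bar\epsilon_{\mathrm{reg}} \leq \hat\epsilon_{\mathrm{class}} + \hat\epsilon_{\mathrm{regret}} + O(\sqrt{\log(1/\delta)/(Nm)})$. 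Chaining the three steps and plugging into the bound from step (i) produces the stated inequality; the $N\to\infty$ statement then follows because no-regret gives $\hat\epsilon_{\mathrm{regret}}\to 0$ while $Nm\to\infty$ kills the concentration term.

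Step (ii) uses the standard argmax inequality: for $\hat\pi(s,t)=\argmin_a\hat Q_{T-t+1}(s,a)$ and $a^*=\argmin_a Q^*_{T-t+1}(s,a)$,
\[
Q^*_{T-t+1}(s,\hat\pi(s,t)) - Q^*_{T-t+1}(s,a^*) \leq 2\max_a|\hat Q(s,t,a)-Q^*_{T-t+1}(s,a)| \leq 2\sqrt{\textstyle\sum_a(\hat Q - Q^*)^2}.
\]
Since exploration actions are drawn uniformly, $\sum_a(\cdot) = |A|\,\Exp_{a\sim U(A)}[\cdot]$; taking the outer expectation over $(s,t)$, averaging over iterations, and applying Jensen's inequality then yields $\bar\epsilon_{\mathrm{class}}\leq 2\sqrt{|A|}\sqrt{\bar\epsilon_{\mathrm{reg}}}$, where $\bar\epsilon_{\mathrm{reg}}$ is the average expected squared-loss regret of $\hat Q_i$ against the Bayes-optimal regressor on the data distribution. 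Step (iii) then applies Azuma--Hoeffding to the martingale of deviations between observed regression losses and their conditional expectations: the $m$ samples collected at iteration $i$ are i.i.d.\ given the history (hence given $\hat Q_i$), the increments are bounded by $O(Q_{\max}^2)$, and a union bound over the sequence of learner regressors $\hat Q_{1:N}$ together with the best-in-class regressor on the aggregate data delivers a uniform deviation of order $\sqrt{\log(1/\delta)/(Nm)}$ between population and empirical averages.

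The main obstacle is the concentration argument in step (iii): the sampling distributions shift across iterations with $\hat\pi_i$, so the examples are not i.i.d., and the best-in-class regressor is selected after inspecting the whole aggregate dataset. The fix is to cast concentration as a martingale statement under the filtration generated by the iteration-by-iteration data -- the learner $\hat Q_i$ is fixed at the start of round $i$, so the $Nm$ per-example losses minus their conditional means form a bounded martingale-difference sequence to which Azuma--Hoeffding applies directly -- and to handle the best-in-class term by a uniform-convergence or covering/union bound over $\mathcal{Q}$, or by invoking the same martingale with the fixed (data-independent) population minimiser. Once this concentration step is in place, the algebraic chaining of steps (i)--(iii) mirrors the proof of Theorem~\ref{thmDaggerCostToGo} and the limit statement is immediate.
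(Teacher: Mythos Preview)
Your three-step outline matches the paper's proof: the performance-difference decomposition plus the $\beta_i$ bookkeeping from Theorem~\ref{thmDaggerCostToGo}, the argmax-to-squared-loss reduction (with the same $\max_a|\cdot|\leq\sqrt{\sum_a(\cdot)^2}$ and Jensen manipulations), and a martingale concentration step. The only place the paper differs from your sketch is how it handles the best-in-class comparison in step~(iii). Rather than a uniform-convergence or covering argument over $\mathcal{Q}$ (which would insert class-complexity terms absent from the stated bound) or a ``population minimiser'' in $\mathcal{Q}$ (which would not be data-independent here, since the population losses $\ell_i$ depend on the random $\pi_i$'s), the paper pairs $\hat Q^i$ with the true Bayes regressor $Q^*$---the expert's actual cost-to-go, which is data-independent but generally not in $\mathcal{Q}$---inside a \emph{single} martingale difference
\[
Y_{i,j}=\bigl[\ell_i(\hat Q^i)-(\hat Q^i_{T-t_{ij}+1}(s_{ij},a_{ij})-Q_{ij})^2\bigr]-\bigl[\ell_i(Q^*)-(Q^*_{T-t_{ij}+1}(s_{ij},a_{ij})-Q_{ij})^2\bigr],
\]
applies Azuma--Hoeffding once, and then swaps $Q^*$ for the \emph{empirical} Bayes regressor $\tilde Q^*$ via the monotonicity $\sum_i\hat\ell_i(\tilde Q^*)\leq\sum_i\hat\ell_i(Q^*)$. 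Since $\hat\epsilon_{\mathrm{class}}$ is defined as empirical regret against $\tilde Q^*$, this recovers exactly $\hat\epsilon_{\mathrm{class}}+\hat\epsilon_{\mathrm{regret}}+O(\sqrt{\log(1/\delta)/Nm})$ with no additional terms. Your second alternative in step~(iii) is close in spirit, but the specific choice of $Q^*$ as the anchor and the swap to $\tilde Q^*$ is what makes the bound come out in the stated form.
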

\begin{proof}
Consider $\tilde{\pi}$, the bayes-optimal non-stationary policy that minimizes loss on the cost-to-go examples. That is, $\tilde{\pi}(s,t) = \min_{a \in A} Q_{T-t+1}^*(s,a)$, i.e. it picks the action with minimum expected expert cost-to-go conditioned on being in state $s$ and time $t$. Additionally, given the observed noisy Q-values from each trajectory, the bayes-optimal regressor is simply the Q-value function $Q^*$ of the expert that predicts the expected cost-to-go. 

At each iteration $i$, we execute a policy $\hat{\pi}_i$, such that $\hat{\pi}_i(s,t) = \argmin_{a \in A} \hat{Q}^i_{T-t+1}(s,a)$, where $\hat{Q}^i$ is the current regressor at iteration $i$ from the base online learner. The cost-sensitive regret of policy $\hat{\pi}_i$, compared to $\tilde{\pi}$, can be related to the regression regret of $\hat{Q}_i$ as follows:

Consider any state $s$ and time $t$. Let $\hat{a}_i = \hat{\pi}_i(s,t)$ and consider the action $a'$ of any other policy. We have that:

\begin{displaymath}
\begin{array}{rl}
\multicolumn{2}{l}{Q^*_{T-t+1}(s,\hat{a}_i) - Q^*_{T-t+1}(s,a)}\\
\leq & \hat{Q}^i_{T-t+1}(s,\hat{a}_i) - \hat{Q}^i_{T-t+1}(s,a') + Q^*_{T-t+1}(s,\hat{a}_i) - \hat{Q}^i_{T-t+1}(s,\hat{a}_i) + \hat{Q}^i_{T-t+1}(s,a') -  Q^*_{T-t+1}(s,a')\\
\leq & Q^*_{T-t+1}(s,\hat{a}_i) - \hat{Q}^i_{T-t+1}(s,\hat{a}_i) + \hat{Q}^i_{T-t+1}(s,a') -  Q^*_{T-t+1}(s,a')\\
\leq & 2 \max_{a \in A} |Q^*_{T-t+1}(s,a) - \hat{Q}^i_{T-t+1}(s,a)|\\
\end{array}
\end{displaymath}

Additionally, for any joint distribution $D$ over $(s,t)$, and $U(A)$ the uniform distribution over actions, we have that:

\begin{displaymath}
\begin{array}{rl}
\multicolumn{2}{l}{( \mathbb{E}_{(s,t) \sim D}[ \max_{a \in A} |Q^*_{T-t+1}(s,a) - \hat{Q}^i_{T-t+1}(s,a)|] )^2}\\
\leq & \mathbb{E}_{(s,t) \sim D}[ \max_{a \in A} |Q^*_{T-t+1}(s,a) - \hat{Q}^i_{T-t+1}(s,a)|^2 ]\\
\leq & \mathbb{E}_{(s,t) \sim D}[ \sum_{a \in A} |Q^*_{T-t+1}(s,a) - \hat{Q}^i_{T-t+1}(s,a)|^2 ]\\
= & |A| \mathbb{E}_{(s,t) \sim D, a \sim U(A)}[ |Q^*_{T-t+1}(s,a) - \hat{Q}^i_{T-t+1}(s,a)|^2 ]\\
\end{array}
\end{displaymath}

Thus we obtain that for every $\hat{\pi}_i$:

\begin{displaymath}
\begin{array}{rl}
\multicolumn{2}{l}{\Exp_{t \sim U(1:T), s \sim d^t_{\pi_i}}[Q^*_{T-t+1}(s,\hat{\pi}_i) - Q^*_{T-t+1}(s,\tilde{\pi})]}\\
\leq & 2 \Exp_{t \sim U(1:T), s \sim d^t_{\pi_i}}[\max_{a \in A} |Q^*_{T-t+1}(s,a) - \hat{Q}^i_{T-t+1}(s,a)|]\\
\leq & 2 \sqrt{|A|} \sqrt{\mathbb{E}_{t \sim U(1:T), s \sim d^t_{\pi_i}, a \sim U(A)}[ |Q^*_{T-t+1}(s,a) - \hat{Q}^i_{T-t+1}(s,a)|^2 ]}\\
\end{array}
\end{displaymath}

Thus

\begin{displaymath}
\begin{array}{rl}
\multicolumn{2}{l}{J(\overline{\pi}) - J(\pi^*)}\\
= & \frac{T}{N} \sum_{i=1}^N \Exp_{t \sim U(1:T), s \sim d^t_{\hat{\pi}_i}}[Q^*_{T-t+1}(s,\hat{\pi}_i) - Q^*_{T-t+1}(s,\pi^*)]\\
\leq & \frac{T}{N} \sum_{i=1}^N \Exp_{t \sim U(1:T), s \sim d^t_{\pi_i}}[Q^*_{T-t+1}(s,\hat{\pi}_i) - Q^*_{T-t+1}(s,\pi^*)] +  \frac{2 T Q^*_{\max}}{N} [ n_\beta + T \sum_{i=n_\beta+1}^N \beta_i ]\\
\leq & \frac{T}{N} \sum_{i=1}^N  \Exp_{t \sim U(1:T), s \sim d^t_{\pi_i}}[Q^*_{T-t+1}(s,\hat{\pi}_i) - Q^*_{T-t+1}(s,\tilde{\pi})] +  \frac{2 T Q^*_{\max}}{N} [ n_\beta + T \sum_{i=n_\beta+1}^N \beta_i ]\\
\leq & \frac{2 \sqrt{|A|} T}{N}  \sum_{i=1}^N \sqrt{\mathbb{E}_{t \sim U(1:T), s \sim d^t_{\pi_i}, a \sim U(A)}[ |Q^*_{T-t+1}(s,a) - \hat{Q}^i_{T-t+1}(s,a)|^2 ]} \\
& +  \frac{2 T Q^*_{\max}}{N} [ n_\beta + T \sum_{i=n_\beta+1}^N \beta_i ]\\
\leq & 2 \sqrt{|A|} T  \sqrt{\frac{1}{N} \sum_{i=1}^N \mathbb{E}_{t \sim U(1:T), s \sim d^t_{\pi_i}, a \sim U(A)}[ |Q^*_{T-t+1}(s,a) - \hat{Q}^i_{T-t+1}(s,a)|^2 ]} \\
& +  \frac{2 T Q^*_{\max}}{N} [ n_\beta + T \sum_{i=n_\beta+1}^N \beta_i ]\\
\end{array}
\end{displaymath}

Now in state $s$ at time $t$, when performing $a$ and then following the expert, consider the distribution $d_{s,a,t}$ over observed cost-to-go $Q$, such that $\mathbb{E}_{Q \sim d_{s,a,t}}[Q] = Q^*_{T-t+1}(s,a)$.

For any regressor $\hat{Q}$, define the expected squared loss in predictions of the observed cost-to-go at iteration $i$ as $\ell_i(\hat{Q}) = \mathbb{E}_{t \sim U(1:T), s \sim d^t_{\pi_i}, a \sim U(A), Q \sim d_{s,t,a} }[ |Q - \hat{Q}_{T-t+1}(s,a)|^2 ]$. Then since for any random variable $X$ with mean $\mu$, if we have an estimate $\hat{\mu}$ of the mean, $|\hat{\mu}-\mu|^2 = \mathbb{E}_x[(x-\hat{\mu})^2 - (x-\mu)^2]$, we have that:

\begin{displaymath}
\frac{1}{N} \sum_{i=1}^N \mathbb{E}_{t \sim U(1:T), s \sim d^t_{\pi_i}, a \sim U(A)}[ |Q^*_{T-t+1}(s,a) - \hat{Q}^i_{T-t+1}(s,a)|^2 ]
= \frac{1}{N}  \sum_{i=1}^N \ell_i(\hat{Q}^i) - \ell_i(Q^*)
\end{displaymath}

Now, in the finite sample case, consider collecting $m$ samples at each iteration $i$: $\{ (s_{ij},a_{ij}, t_{ij}, Q_{ij}) \}_{j=1}^m$. The expected squared loss $\ell_i$ is estimated as $\hat{\ell}_i(\hat{Q}) = \frac{1}{m} \sum_{j=1}^m (\hat{Q}_{T-t_{ij}+1}(s_{ij},a_{ij}) - Q_{ij})^2$, and the no-regret algorithm is run on the estimated loss $\hat{\ell}_i$.

Define $Y_{i,j} = \ell_i(\hat{Q}^i) - (\hat{Q}^i_{T-t_{ij}+1}(s_{ij},a_{ij}) - Q_{ij})^2 - \ell_i(Q^*) + (Q^*_{T-t_{ij}+1}(s_{ij},a_{ij}) - Q_{ij})^2$, the difference between the expected squared loss and the empirical square loss at each sample for both $\hat{Q}^i$ and $Q^*$. Conditioned on previous trajectories, each $Y_{i,j}$ has expectation 0. Then the sequence of random variables $X_{km+l} = \sum_{i=1}^k \sum_{j=1}^m Y_{i,j} + \sum_{j=1}^l Y_{(k+1),j}$, for $k \in \{0,1,2,\dots,N-1\}$ and $l \in \{1,2,\dots,m\}$, forms a martingale, and if the squared loss at any sample is bounded by $\ell_{\max}$, we obtain that $|X_{i} - X_{i+1}| \leq 2 \ell_{\max}$. By Azuma-Hoeffding's inequality, this implies that with probability at least $1-\delta$, $\frac{1}{Nm}X_{Nm} \leq 2 \ell_{\max} \sqrt{\frac{2\log(1/\delta)}{Nm}}$. 

Denote the empirical average online regret on the training squared loss $\hat{\epsilon}_{\textrm{regret}} =  \frac{1}{N} \sum_{i=1}^N \hat{\ell}_i(\hat{Q}^i) -  \min_{\hat{Q} \in \mathcal{Q}} \frac{1}{N} \sum_{i=1}^N \hat{\ell}_i(\hat{Q})$. Let $\tilde{Q}^*$ be the bayes-optimal regressor on the finite training data, and define the empirical regression regret of the best regressor in the class as $\hat{\epsilon}_{\textrm{class}} =  \min_{\hat{Q} \in \mathcal{Q}} \frac{1}{N} \sum_{i=1}^N [\hat{\ell}_i(\hat{Q}) - \hat{\ell}_i(\tilde{Q}^*)]$.
 
Then we obtain that with probability at least $1-\delta$:

\begin{displaymath}
\begin{array}{rl}
\multicolumn{2}{l}{\frac{1}{N}  \sum_{i=1}^N \ell_i(\hat{Q}^i) - \ell_i(Q^*)}\\
= & \frac{1}{N} \sum_{i=1}^N \hat{\ell}_i(\hat{Q}^i) - \hat{\ell}_i(Q^*) + \frac{1}{Nm} X_{Nm}\\
\leq & \frac{1}{N} \sum_{i=1}^N \hat{\ell}_i(\hat{Q}^i) - \hat{\ell}_i(Q^*) + 2 \ell_{\max} \sqrt{\frac{2\log(1/\delta)}{Nm}}\\
\leq & \min_{\hat{Q} \in \mathcal{Q}} \frac{1}{N} \sum_{i=1}^N [\hat{\ell}_i(\hat{Q}) - \hat{\ell}_i(Q^*)] + \hat{\epsilon}_{\textrm{regret}} + 2 \ell_{\max} \sqrt{\frac{2\log(1/\delta)}{Nm}}\\
\leq & \hat{\epsilon}_{\textrm{class}} + \hat{\epsilon}_{\textrm{regret}} + 2 \ell_{\max} \sqrt{\frac{2\log(1/\delta)}{Nm}}\\
\end{array}
\end{displaymath}

where the last inequality follows from the fact that $\sum_{i=1}^N \hat{\ell}_i(\tilde{Q}^*) \leq \sum_{i=1}^N \hat{\ell}_i(Q^*)$.

Combining with the above, we obtain that with probability at least $1-\delta$:

\begin{displaymath}
J(\overline{\pi}) - J(\pi^*) \leq 2 \sqrt{|A|} T  \sqrt{\hat{\epsilon}_{\textrm{class}} + \hat{\epsilon}_{\textrm{regret}} + 2 \ell_{\max} \sqrt{\frac{2\log(1/\delta)}{Nm}}} + \frac{2 T Q^*_{\max}}{N} [ n_\beta + T \sum_{i=n_\beta+1}^N \beta_i ]
\end{displaymath}
\end{proof}

\section*{\rldagger Reduction Analysis}

We here provide the proof of the result for \rldagger, sampled from state exploration distributions $\nu_{1:T}$.

To analyze this version, we begin with an alternate version of the performance difference lemma (lemma  \ref{lemPerfDiff}) presented before:
\begin{lemma} \label{lemPerfDiff2}
Let $\pi$ and $\pi'$ be any two policy and denote $V_t$ and $Q_t$ the $t$-step value function and $Q$-value function of policy $\pi$ respectively, then:
\begin{displaymath}
J(\pi) - J(\pi') = T \Exp_{t \sim U(1:T), s \sim d^t_{\pi'}}[V_{T-t+1}(s) - Q_{T-t+1}(s,\pi')]
\end{displaymath}
for $U(1:T)$ the uniform distribution on the set $\{1,2,\dots,T\}$.
\end{lemma}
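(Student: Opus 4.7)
The plan is to mirror the telescoping/hybrid-policy argument used for Lemma \ref{lemPerfDiff}, but swap which policy plays the role of the "base" tail. Whereas the first performance difference lemma built hybrids that execute $\pi$ for a prefix and $\pi'$ for the suffix (so that the value and $Q$-function of $\pi'$ naturally appear), here we want $V$ and $Q$ of $\pi$ to appear together with an expectation under $d^t_{\pi'}$. This tells us to build the hybrids the other way: let $\pi_t$ denote the non-stationary policy that executes $\pi'$ for the first $t$ steps and then switches to $\pi$ from step $t+1$ through $T$. Then $\pi_0 = \pi$ (so $J(\pi_0) = J(\pi)$) and $\pi_T = \pi'$ (so $J(\pi_T) = J(\pi')$), which lets us write $J(\pi) - J(\pi') = \sum_{t=1}^T [J(\pi_{t-1}) - J(\pi_t)]$.

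Next I would analyze a single term $J(\pi_{t-1}) - J(\pi_t)$. Both hybrids behave identically for the first $t-1$ steps (they run $\pi'$), so the marginal state distribution at time $t$ is $d^t_{\pi'}$ under each. They also behave identically from step $t+1$ onward (both run $\pi$). The only difference is the action chosen at step $t$: $\pi_{t-1}$ uses $\pi$ at time $t$ and then $\pi$ afterward, giving expected tail cost $V_{T-t+1}(s)$; $\pi_t$ uses $\pi'$ at time $t$ and then $\pi$ afterward, giving expected tail cost $Q_{T-t+1}(s,\pi')$. Therefore
\begin{displaymath}
J(\pi_{t-1}) - J(\pi_t) = \Exp_{s \sim d^t_{\pi'}}\bigl[V_{T-t+1}(s) - Q_{T-t+1}(s,\pi')\bigr].
\end{displaymath}
Summing over $t$ and rewriting the sum as $T$ times a uniform expectation over $t$ gives the claim.

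I do not expect any serious obstacle: the argument is almost verbatim the proof of Lemma \ref{lemPerfDiff}, and the only step that requires a little care is verifying that the state distribution at the ``swap point'' is indeed $d^t_{\pi'}$ (and not $d^t_{\pi}$, which is what would appear if the hybrids had been defined the other way). Once the hybrids are set up correctly, the telescoping and the expectation-over-$t$ rewrite are routine.
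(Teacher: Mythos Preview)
Your argument is correct. The paper takes an even shorter route: it simply applies Lemma~\ref{lemPerfDiff} with the roles of $\pi$ and $\pi'$ interchanged (so that the value and $Q$-functions appearing are those of $\pi$, and the state distribution is $d^t_{\pi'}$), obtaining $J(\pi')-J(\pi) = T\,\Exp_{t\sim U(1:T),\,s\sim d^t_{\pi'}}[Q_{T-t+1}(s,\pi')-V_{T-t+1}(s)]$, and then negates both sides. Your telescoping construction is exactly what underlies that invocation, so the two arguments are the same in substance; the paper just skips rebuilding the hybrids since Lemma~\ref{lemPerfDiff} already did that work.
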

\begin{proof}
By applying lemma \ref{lemPerfDiff} to $J(\pi')-J(\pi)$, we obtain:
$$J(\pi')-J(\pi) = T \Exp_{t \sim U(1:T), s \sim d^t_{\pi'}}[Q_{T-t+1}(s,\pi')-V_{T-t+1}(s)]$$
This proves the lemma.
\end{proof}

Now denote the loss $L_n$ used by the online learner at iteration $n$, s.t.:
$$L_n(\pi) = \mathbb{E}_{t \sim U(1:T), s \sim \nu_t}[Q^{\hat{\pi}_n}_{T-t+1}(s,\pi)].$$
and $\epsilon_{\textrm{regret}}$ the average regret after the $N$ iterations of \rldagger:
$$\epsilon_{\textrm{regret}} = \frac{1}{N} \sum_{i=1}^N L_i(\pi_i) - \min_{\pi \in \Pi} \frac{1}{N} \sum_{i=1}^N L_i(\pi).$$

For any policy $\pi \in \Pi$, denote the average $L_1$ distance between $\nu_t$ and $d^t_{\pi}$ over time steps $t$ as:
$$D(\nu,\pi) = \frac{1}{T} \sum_{t=1}^T ||\nu_t - d^t_\pi||_1.$$

Assume the cost-to-go of the learned policies $\pi_1,\pi_2,\dots,\pi_N$ are non-negative and bounded by $Q_{\max}$, for any state $s$, action $a$ and time $t$ (in the worst case this is $T C_{\max}$). Denote $\hat{\pi}$ the best policy found by \rldagger over the iterations, and $\overline{\pi}$ the uniform mixture policy over $\pi_{1:N}$ defined as before. Then we have to following guarantee with this version of \rldagger with learner's cost-to-go:

\begin{theorem}
For any $\pi' \in \Pi$:
\begin{displaymath}
J(\hat{\pi}) \leq J(\overline{\pi}) \leq J(\pi') + T \epsilon_{\textrm{regret}} + T Q_{\max} D(\nu,\pi')
\end{displaymath}
Thus, if a no-regret online cost-sensitive classification algorithm is used, then:
\begin{displaymath}
\lim_{N \rightarrow \infty} J(\overline{\pi}) \leq J(\pi') + T Q_{\max} D(\nu,\pi')
\end{displaymath}
\end{theorem}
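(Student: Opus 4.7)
The plan is to mimic the structure of the \ctgd analysis but swap the roles played by the state distributions and the $Q$-functions. The main tool is the second performance difference lemma (Lemma \ref{lemPerfDiff2}), which is tailored to this situation: it expresses $J(\hat{\pi}_i) - J(\pi')$ as an expectation under the state distribution $d^t_{\pi'}$ but in terms of the value and $Q$-functions of $\hat{\pi}_i$. This is exactly what we need because \rldagger samples states from $\nu_t$ (an approximation of $d^t_{\pi'}$) and observes rollouts of $\hat{\pi}_i$ (so its $Q^{\hat{\pi}_i}$ is precisely what we have access to).

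First, fix any $\pi' \in \Pi$ and apply Lemma \ref{lemPerfDiff2} with $\pi = \hat{\pi}_i$ to obtain
\begin{displaymath}
J(\hat{\pi}_i) - J(\pi') = T\,\Exp_{t \sim U(1:T),\, s \sim d^t_{\pi'}}\bigl[Q^{\hat{\pi}_i}_{T-t+1}(s,\hat{\pi}_i) - Q^{\hat{\pi}_i}_{T-t+1}(s,\pi')\bigr],
\end{displaymath}
using $V^{\hat{\pi}_i}(s) = Q^{\hat{\pi}_i}(s,\hat{\pi}_i)$. Next, move from $d^t_{\pi'}$ to $\nu_t$. Consider the joint distributions $P(t,s) = \frac{1}{T} d^t_{\pi'}(s)$ and $Q(t,s) = \frac{1}{T} \nu_t(s)$; their $L_1$ distance is exactly $D(\nu,\pi')$. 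The integrand lies in $[-Q_{\max}, Q_{\max}]$, so Lemma \ref{lemExpL1} (with range $2Q_{\max}$) yields
\begin{displaymath}
J(\hat{\pi}_i) - J(\pi') \leq T\bigl[L_i(\hat{\pi}_i) - L_i(\pi')\bigr] + T Q_{\max} D(\nu,\pi'),
\end{displaymath}
where I have recognized the resulting $\nu_t$-expectation as the difference of loss values $L_i$ that the online learner is actually fed.

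Finally, average this inequality over $i = 1, \ldots, N$. The first term becomes $T$ times $\frac{1}{N}\sum_i [L_i(\hat{\pi}_i) - L_i(\pi')]$, which is upper bounded by $T \epsilon_{\textrm{regret}}$ because $\pi' \in \Pi$ and so the $\min$ in the definition of $\epsilon_{\textrm{regret}}$ is at most $\frac{1}{N}\sum_i L_i(\pi')$. The additive penalty $T Q_{\max} D(\nu,\pi')$ is unchanged. Since $J(\overline{\pi}) = \frac{1}{N} \sum_i J(\hat{\pi}_i)$ and $J(\hat{\pi}) = \min_i J(\hat{\pi}_i) \leq J(\overline{\pi})$, we obtain the claimed bound; the limit statement is immediate from $\epsilon_{\textrm{regret}} \to 0$ for any no-regret learner.

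The only real conceptual step is the choice of which version of the performance difference lemma to invoke: if one used Lemma \ref{lemPerfDiff} instead, the integrand would involve the (unknown) $Q$-function of $\pi'$ and the expectation would be under $d^t_{\hat{\pi}_i}$, neither of which matches what \rldagger collects. Using Lemma \ref{lemPerfDiff2} correctly aligns everything with the loss $L_i$, and the remainder is a routine application of the $L_1$ change-of-measure bound followed by the standard averaging/regret argument used elsewhere in the paper.
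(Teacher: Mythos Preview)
Your proposal is correct and follows essentially the same approach as the paper's proof: apply Lemma~\ref{lemPerfDiff2} to get the performance difference in terms of $Q^{\hat{\pi}_i}$ under $d^t_{\pi'}$, shift the state distribution to $\nu_t$ via Lemma~\ref{lemExpL1} (incurring the $T Q_{\max} D(\nu,\pi')$ penalty), average over iterations, and bound by the online regret using $\pi' \in \Pi$. The only cosmetic difference is that you apply the $L_1$ bound once to the joint $(t,s)$ distribution while the paper applies it per timestep and sums; the two are equivalent.
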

\begin{proof}
Let $Q^i_t$ denote the $t$-step $Q$-value function of policy $\hat{\pi}_i$. Then for every $\hat{\pi}_i$ we have:
\begin{displaymath}
\begin{array}{rl}
\multicolumn{2}{l}{J(\hat{\pi}_i) - J(\pi')}\\
= & T \Exp_{t \sim U(1:T),s \sim d^t_{\pi'}}[Q^{i}_{T-t+1}(s,\hat{\pi}_i) - Q^i_{T-t+1}(s,\pi')]\\
= & \sum_{t=1}^T \Exp_{s \sim d^t_{\pi'}}[Q^{i}_{T-t+1}(s,\hat{\pi}_i) - Q^i_{T-t+1}(s,\pi')]\\
\leq & \sum_{t=1}^T \Exp_{s \sim \nu_t}[Q^{i}_{T-t+1}(s,\hat{\pi}_i) - Q^i_{T-t+1}(s,\pi')] + Q_{\max} \sum_{t=1}^T ||\nu_t - d^t_{\pi'}||_1\\
= & T \Exp_{t \sim U(1:T),s \sim \nu_t}[Q^{i}_{T-t+1}(s,\hat{\pi}_i) - Q^i_{T-t+1}(s,\pi')] + T Q_{\max} D(\nu,\pi')\\
\end{array}
\end{displaymath}
where we use lemma \ref{lemPerfDiff2} in the first equality, and lemma \ref{lemExpL1} in the first inequality.

Thus:
\begin{displaymath}
\begin{array}{rl}
\multicolumn{2}{l}{J(\overline{\pi}) - J(\pi')}\\
= & \frac{1}{N} \sum_{i=1}^N [J(\hat{\pi}_i) - J(\pi')]\\
\leq & \frac{1}{N} \sum_{i=1}^N [ T \Exp_{t \sim U(1:T),s \sim \nu_t}[Q^{i}_{T-t+1}(s,\hat{\pi}_i) - Q^i_{T-t+1}(s,\pi')] + T Q_{\max} D(\nu,\pi') ]\\
\leq & T \frac{1}{N} \sum_{i=1}^N \Exp_{t \sim U(1:T),s \sim \nu_t}[Q^{i}_{T-t+1}(s,\hat{\pi}_i)] - T \min_{\pi \in \Pi} \frac{1}{N} \sum_{i=1}^N \Exp_{t \sim U(1:T),s \sim \nu_t}[Q^i_{T-t+1}(s,\pi)]\\
& + T Q_{\max} D(\nu,\pi')\\
=  & T \epsilon_{\textrm{regret}} + T Q_{\max} D(\nu,\pi')\\
\end{array}
\end{displaymath}

Again, $J(\hat{\pi}) \leq J(\overline{\pi})$ since the minimum is always better than the average, i.e. $\min_i J(\hat{\pi}_i) \leq \frac{1}{N} \sum_{i=1}^N J(\hat{\pi}_i)$. This proves the first part of the theorem.

The second part follows immediately from the fact that  $\epsilon_{\textrm{regret}} \rightarrow 0$ as $N \rightarrow \infty$.
\end{proof}

%\includepdf[pages=-]{appendix}

\end{document}